\newtheorem{prop}[theorem]{Proposition}
\def\lstAZ{A, B, C, D, E, F, G, H, I, J, K, L, M, N, O, P, Q, R, S, T, U, V, W, X, Y, Z}
\def\lstaz{a, b, c, d, e, f, g, h, i, j, k, l, m, n, o, p, q, r, s, t, u, v, w, x, y, z}
\def\lstAZBB{B, C, D, E, F, G, H, I, J, K, L, M, N, O, P, Q, R, T, U, V, W, X, Y, Z}
\newcommand{\MkScr}[1]{\expandafter\def\csname s#1\endcsname{\mathscr{#1}}}
\newcommand{\MkUp}[1]{\expandafter\def\csname u#1\endcsname{\mathrm{#1}}}
\newcommand{\MkFrak}[1]{\expandafter\def\csname f#1\endcsname{\mathfrak{#1}}}
\newcommand{\MkCal}[1]{\expandafter\def\csname c#1\endcsname{\mathcal{#1}}}
\newcommand{\MkBB}[1]{\expandafter\def\csname #1#1\endcsname{\mathbb{#1}}}
\lstAZ\do{%
	\expandafter\MkScr \i  %
	\expandafter\MkFrak \i  %
	\expandafter\MkUp \i %
	\expandafter\MkCal \i  %
		  }    
\lstaz\do{%
	\expandafter\MkUp \i   }    
\lstAZBB\do{%
	\expandafter\MkBB \i     }
\renewcommand{\P}{\mathrm{P}}
\newcommand{\E}{\mathrm{E}}
\newcommand{\on}[1]{{\operatorname{#1}}}
\newcommand{\Pa}{\operatorname{Pa}} 
\newcommand{\Ne}{\operatorname{Ne}}
\newcommand{\rauf}{{\scriptstyle +1}}
\newcommand{\runter}{{\scriptstyle -1}}
\newcommand{\Insert}{\operatorname{Insert}}
\newcommand{\Delete}{\operatorname{Delete}}
\newcommand{\NA}{\operatorname{NA}}
\newcommand{\A}{a}
\newcommand{\B}{b}
\newcommand{\States}{S}
\newcommand{\invo}{\mathfrak{s}}
\tikzstyle{none}=[]
\tikzstyle{empty}=[fill=BlueGreen!25, draw=black, shape=circle]
\tikzstyle{solid}=[fill=black, draw=black, shape=circle]
\tikzstyle{doubleedge}=[-{Stealth[length=2mm, width=1.5mm]}, style=double]
\tikzstyle{edge}=[-{Stealth[length=2mm, width=1.5mm]}]
\tikzstyle{punkte}=[{Stealth[length=2mm, width=1.5mm]}-,dotted]
\title[Causal Structure Learning With Momentum]{Causal Structure Learning With Momentum: Sampling Distributions
Over Markov Equivalence Classes}
 \author{
 \Name{Moritz Schauer} \Email{smoritz@chalmers.se} \\
 \addr Chalmers University of Technology and University of Gothenburg
 \AND
 \Name{Marcel Wienöbst}  \Email{m.wienoebst@uni-luebeck.de} \\ 
 \addr Institute for Theoretical Computer Science, University of Lübeck
 }
\begin{document}
\maketitle

\begin{abstract}
In the context of inferring a Bayesian network structure (directed acyclic graph, DAG for short), we devise a non-reversible continuous time Markov chain, the ``Causal Zig-Zag sampler'', that targets a probability distribution over classes of observationally equivalent (Markov equivalent) DAGs. The classes are represented as completed partially directed acyclic graphs (CPDAGs). The non-reversible Markov chain relies on the operators used in Chickering's Greedy Equivalence Search (GES) and is endowed with a momentum variable, which improves mixing significantly as we show empirically. The possible target distributions include posterior distributions based on a prior over DAGs and a Markov equivalent likelihood. 
We offer an efficient implementation wherein we develop new algorithms for listing, counting, uniformly sampling, and applying possible moves of the GES operators, all of which significantly improve upon the state-of-the-art run-time.
\end{abstract}
\begin{keywords}
MCMC; Causal Discovery; Markov Equivalence Classes; DAGs.
\end{keywords}

\section{Introduction}
A Bayesian network is a probabilistic graphical model that represents a set of random variables and their conditional (in)dependencies using a directed acyclic graph (DAG).
Graph and random variables are linked by 
the local Markov condition: variables are conditionally independent of
their non-descendants given their parents, which induces a
factorisation of the joint distribution via conditional distributions
of variables given their parents~\citep{lauritzen1996graphical,koller2009probabilistic}. Typically, there are multiple such factorisations or multiple DAGs such that the local Markov condition holds.

Causal Bayesian networks, in which the edges in the DAG represent direct causal influences, provide a theory of how interventions change the joint distribution of latent and observable variables~\citep{pearl2009causality,peters2017elements}. Here, one assumes the \emph{causal} Markov condition that a variable conditional on its direct causes is independent of variables that are not directly or indirectly influenced by it.
\begin{figure}[h!]
\centering
\includegraphics[width=0.85\linewidth]{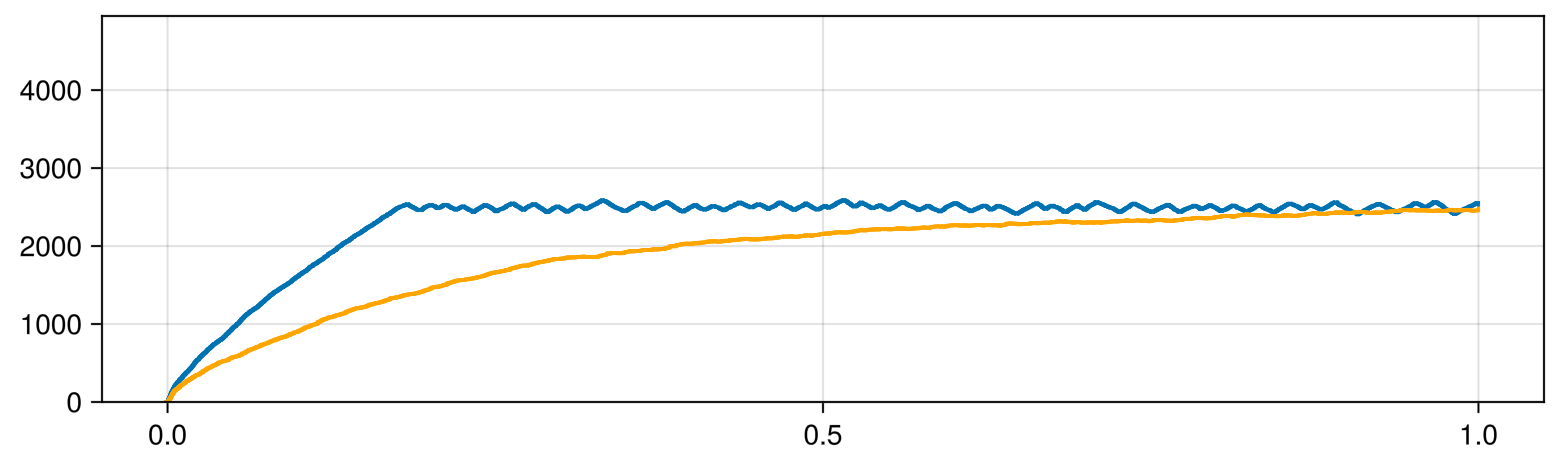}
\caption{Continuous-time trace of the number of edges of the sampled graphs when targeting a uniform distribution on CPDAGs with 100 vertices for the non-reversible sampler proposed here in blue compared with the similar, but reversible, Zanella sampler in orange. The total time of 1 unit corresponds to 5\,000 jumps. Our sampler reaches equilibrium considerably faster and mixes better.
}
\label{fig:convergence100}
\end{figure}
Therefore, even when assuming faithfulness, that all conditional independencies in the data are implied by the factorisation of the underlying causal DAG, observational data is generally insufficient to uniquely determine this graph. Instead the DAGs which cannot be told apart by observational data form a Markov equivalence class (MEC), that is an equivalence class of DAGs~\citep{verma1990equivalence,heckerman1995learning}, usually represented by a \emph{completed partially directed graph} (CPDAG).\footnote{Technical definitions are given in section~\ref{section:preliminaries}.} In Bayesian inference this manifests in marginal likelihoods that are the same for all members of the MEC. Bayesian inference starting from a prior distribution on the equivalence classes hence yields a posterior distribution over Markov equivalence classes.

Markovian Monte Carlo methods allow drawing samples from that posterior distribution. They work by constructing a stochastic process $Z$ with a temporal Markov property,\footnote{Not to be confused with the local Markov condition and the causal Markov assumption in the Bayesian network.}  that has the desired distribution as its equilibrium distribution, see \citet{Roberts2004} for a general account for Markov chains.
The empirical distribution of samples taken from the process then approximates the usually intractable posterior distribution.
This is classically done with discrete time Markov chains, but recently continuous time samplers have also become an active research area \citep{fearnhead2018piecewise}.

In this work, the sampler is based on a stochastic process $Z = (Z_t)_{t \ge 0}$ taking values $(\gamma, d)$ in a space of extended coordinates where $\gamma \in \cM_n$ is a MEC on $n$ variables and $d \in \{\runter, \rauf\}$ is variable indicating a direction of movement corresponding to adding edges to $\gamma$ if $d = \rauf$ and removing edges from $\gamma$ if $d = \runter$. 
This is analogous to \citet{GUSTAFSON1998} who adds a direction variable to a random walk on the integers in order to improve mixing by allowing for repeated moves in the same direction in contrast to choosing a random direction in every step. Also Hamiltonian Monte Carlo uses a momentum variable to balance random walk behaviour and systematic exploration \citep{Neal1996}.

The sampler relies on the operators introduced by \citet{chickering2002optimal}  in the celebrated GES algorithm for estimating a single MEC. They allow to move between MECs, which have DAG members that differ only by a single edge deletion or insertion, thus providing a natural and efficient representation of this space. 
Moreover, they can be used to immediately obtain a reversible Markov chain, as for example recently explored by~\citet{zhou2023complexity} which propose a locally balanced Markov chain sampler in the sense of \citet{Zanella2019} for the problem.
Endowing them with momentum improves mixing and retains closeness to the GES approach, where there are two main phases: (i) the forward phase, during which edges are inserted and (ii) the backward phase, during which edges are deleted. Indeed, our algorithm, which we term \emph{Causal Zig-Zag}, can be viewed as a generalisation of GES and it converges to it in the limit of increasing coldness given by a thermodynamic $\beta$ as we show in section~\ref{sec:ges}. Because GES itself provably recovers the MEC of the underlying true DAG in the limit of large sample size, this translates to Causal Zig-Zag, which is effective in finding high-posterior regions. 
More generally, we make the following contributions.
\begin{enumerate}
    \item We present a sampler for Markov equivalence classes that is both non-reversible and locally balanced with application to Bayesian causal discovery and causal discovery with uncertainty quantification. Similar to the GES algorithm the sampler operates in alternating phases, one phase where edges are inserted and one phase where edges are removed. This makes the sampler non-reversible and improves mixing.

     \item We base the sampler on new, efficient algorithms for listing, counting and applying possible moves in the space of MECs based on Chickering's Insert and Delete operators. These improvements go beyond the use cases in this work and also apply to the original GES and related algorithms. 
    \item We show the benefits and practicality of our approach empirically and make our implementation available in the software package \href{https://github.com/mschauer/CausalInference.jl}{CausalInference.jl}.
\opt{submission}{
}
\end{enumerate}

As first illustration, we use our non-reversible sampler and a reversible counterpart to sample CPDAGs with 100 vertices uniformly. Both samplers start from the empty graph and continue for 5\,000 steps. The samplers require no further choice of tuning parameters. Our sampler reaches equilibrium considerably faster, see figure~\ref{fig:convergence100}. The time of reaching a large set such as, in this case, CPDAGs with $2400$ to $2600$ edges, from a single state (the empty graph) is informative about mixing times~\citep{Peres2013}.

\begin{figure*}
\centering
\includegraphics[width=0.45\linewidth]{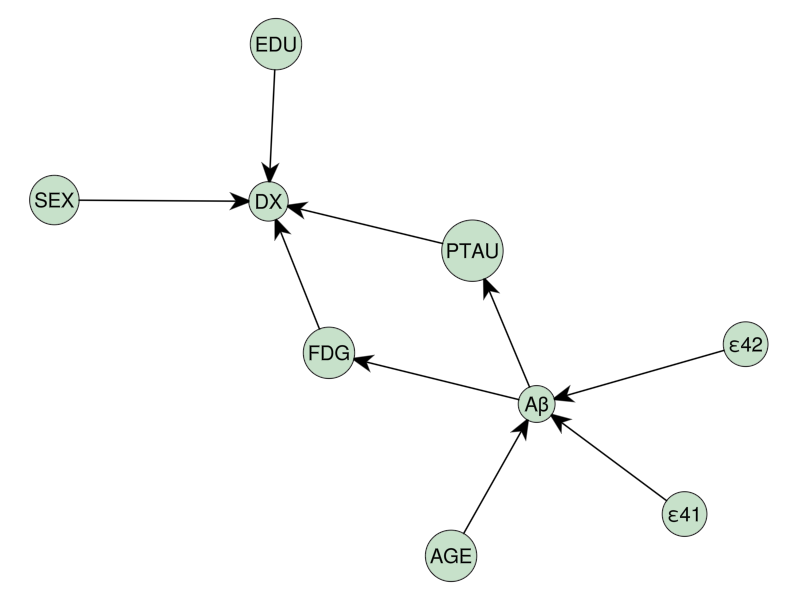}
\hspace{0.5cm}
\includegraphics[width=0.45\linewidth]{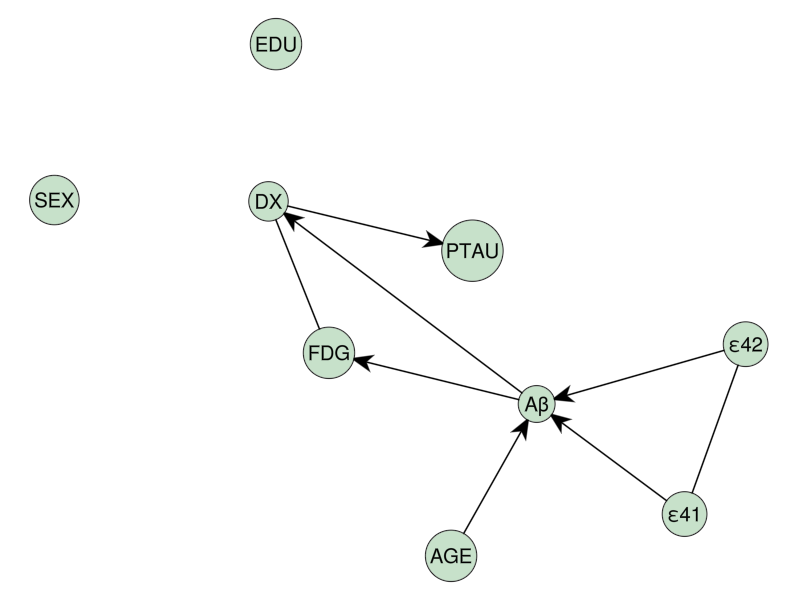}

\caption{The expert assessment of the causal model on the left. On the right, the model with highest posterior probability 0.701, which coincides with the model found by GES. The two models with next highest posterior probabilities are shown in Figure~\ref{fig:adniappendix} in the Appendix. }
\label{fig:adni}
\end{figure*}

As second illustration, we partly reproduce \citet{Shen2020}. They consider data from the Alzheimer’s Disease Neuroimaging
Initiative (ADNI) database (\url{adni.loni.usc.edu}).\footnote{See acknowledgements for more information.}
The variables extracted from the data are fludeoxyglucose PET (FDG), amyloid beta (A$\beta$),
phosphorylated tau (PTAU), number of $\varepsilon 4$ alleles of apolipoprotein E; demographic information: age, sex, years of education
(EDU); and diagnosis on Alzheimer disease (DX).
To account for possibly non-linear effects the number of $\varepsilon4$ alleles (0, 1, or 2) is dummy encoded ($\varepsilon42$, $\varepsilon41$), as it is done in \citet{Shen2020}.
We use our algorithm to sample CPDAGs proportional to their (exponentiated) BIC score with penalty $5.5$ and run the sampler for 50\,000 jumps starting from the empty graph. See section 4.1 in \citet{chickering2002optimal} for a discussion of the Bayesian Information Criterion (BIC) and its relationship to the marginal posterior. Our findings are shown in figures \ref{fig:adni} and \ref{fig:adni-levels}.

\begin{figure}
\centering
\includegraphics[width=0.85\linewidth]{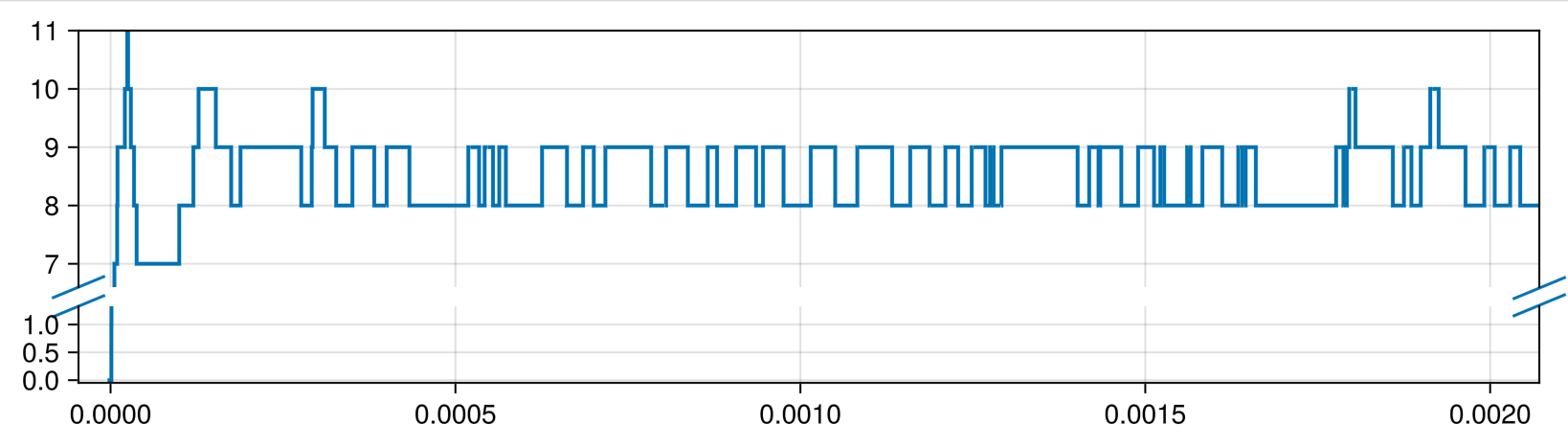}
\caption{Continuous-time trace of the number of edges of the first sampled graphs for the ADNI data. At this time scale, the random time spend in each CPDAG is visible.}
\label{fig:adni-levels}
\end{figure}

\section{Related work} 
 
Bayesian methods for learning DAGs from observational data, which directly target the posterior probability over MECs, as we do in this work, are underrepresented in the literature with popular exact methods estimating the marginal posterior probability of every possible edge \citep{koivisto2004exact} and MCMC samplers focusing on the space of DAGs~\citep{madigan1995bayesian,giudici2003improving,grzegorczyk2008improving} or variable orderings~\citep{friedman2000being,niinimaki2016structure,kuipers2017partition,agrawal2018minimal} being more widespread. Recently, differentiable formulations  
have been pursued and exploited by variational and MCMC methods~\citep{lorch2021dibs,annadani2021variational,cundy2021bcd,deleu2022bayesian,annadani2023bayesdag}. 

On the other hand, when aiming to estimate a single causal structure, classical algorithms such as PC~\citep{spirtes2000causation} and GES~\citep{chickering2002optimal} 
are at their core build on the notion of Markov equivalence.  More generally, exploiting as well as analysing the space and properties of MECs has a long and fruitful history in the causal discovery and Bayesian network communities, beginning with \citet{madigan1996bayesian}, who pivoted the use of MCMC using the search space of MECs for Bayesian structure learning, and \citet{gillispie2002size}, who initiated studies of the size distribution of MECs. Later these works were extended by \citet{pena2007approximate,he2013reversible}, who again used MCMC
to analyse, e.g, the average number of undirected edges in a CPDAG, focusing mainly on sparse graphs. 
Recently, \citet{zhou2023complexity} showed that the GES operators by~\citet{chickering2002learning} have superior mixing properties compared to these earlier MCMC approaches.
The sampler used by \citet{zhou2023complexity} belongs to a class of discrete time locally balanced sampler in high dimensional spaces \citep{Zanella2019}.
For the continuous time perspective,
see \citep{https://doi.org/10.48550/arxiv.1912.04681}.

\section{Preliminaries} \label{section:preliminaries}

\paragraph{Graphs and notation.}
A partially directed graph, here short ``graph'', $G = (V, E)$ consists of a set of $n$ vertices $V$ and a set of $m$
edges $E \subseteq V \times V$.\footnote{Excluding self-edges: $(x,x) \notin E$.}
An undirected edge between vertices $x, y \in V$, denoted $x-y$, has both  
$(x, y) \in E$ and $(y, x) \in E$, and a directed edge $u \rightarrow v$ has $(x, y) \in E$
and $(y, x) \not\in E$.  Vertices linked by an edge (of any type) are
\emph{adjacent}, and vertices linked by undirected edges are \emph{neighbours} of each other. We say that $x$ is
a \emph{parent} of $y$ if $x\rightarrow y$.  We denote by $\Pa(x)$
and $\Ne(x)$ the set of parents and neighbors of~$x$. A directed graph contains no undirected edges.
A partially directed acyclic graph (PDAG) is a graph without directed cycles and a directed acyclic graph (DAG) is a \emph{directed} graph with this property.
We denote the space of DAGs over $n$ vertices as $\cD_n$.
We let $\uU(S)$ denote the uniform distribution on a set $S$. $\sqcup$ denotes the disjoint union of sets.

\paragraph{Markov equivalence classes.}
In case of a Bayesian network, the vertex set $V$ is a set of random variables. 
A \emph{v-structure} are vertices $x, y, z$ such that $x \rightarrow y \leftarrow z$ and $x, z$ are not adjacent. All DAGs on a vertex set $V$ with the same set of v-structures and the same set of adjacencies are observationally equivalent or \emph{Markov-equivalent} as shown by~\citet{verma1990equivalence} and form the Markov equivalence class (MEC).  
A CPDAG (completed PDAG) has $x \rightarrow y$, if $x \rightarrow y$ in each member of the equivalence class, and $x - y$, if there are DAGs $G$ and $G'$ in the MEC such that $G$ contains $x \rightarrow y$ and $G'$ contains $x \leftarrow y$. The CPDAG uniquely determines the MEC. We denote the space of CPDAGs or MECs  as $\cM_n$ and denote its elements by $\gamma, \eta, \dots \in \cM_n$. 
A scoring function $\cD_n \to [0, \infty)$ for DAGs is a \emph{Markov equivalent score} if it assigns the same score to any DAG in the same MEC.

\paragraph{Markov jump process.}
Following \citet{kallenberg2002foundations}, 
a continuous time stochastic process $(Z_t)_{t \ge 0}$ on a countable state space $\States$ with almost surely right-continuous paths that are constant apart from isolated jumps with the temporal Markov property is a Markov jump process.\footnote{We only consider time-homogeneous processes where
$
\P(Z_{t} = \B \mid Z_{s} = \A)$
only depends on $t-s$.}

In our case, the state space is the space of MECs $\States = \cM_n$ or the space of MECs extended by a direction or momentum, $\States = \cM_n \times \{\rauf, \runter\}$, and an abstract notion of time inherent to the sampler, related but not identical to the run time of its implementation.

Denote the jump times of $Z$ as $0 = \tau_0 < \tau_1 <\tau_2 < \dots$, these are random times $\tau$ where $Z_{\tau} \ne Z_{\tau-}$.
The law of a Markov jump process can be described by 
\begin{itemize}
    \item the starting distribution $Z_0 \sim \nu$; 
\item the rate function $\Lambda\colon \States \to [0, \infty)$
such that conditional on $Z_{\tau_i} = \A$, $\A \in \States$, the time to the next jump $\tau_{i+1} - \tau_i$ is exponentially distributed with rate $\Lambda$ depending on $\A$;\footnote{So $\Lambda(\A) = 1/(\E[\tau_{i+1} - \tau_i \mid Z_{\tau_i} = \A])$.} 
\item a jump kernel, such that
$Z_{\tau_{i+1}}$ has the conditional distribution $\kappa_\A$ given $Z_{\tau_{i}} = \A$.
\end{itemize}

This entails by the Markov property that $\tau_1/\Lambda(Z_0)$, $(\tau_{2} - \tau_1)/\Lambda(Z_{\tau_1})$, $\dots$ form an independent sequence of $\operatorname{Exp}(1)$ random variables and
$Z_0, Z_{\tau_1}, Z_{\tau_2}, \dots$ an embedded discrete-time Markov chain where $P(Z_{\tau_i} = \B \mid Z_{\tau_i} = \A) = \kappa_\A \{\B\}$ with $\kappa_\A$ for $\A \in \States$ being a probability kernel
$\sum_{\B \in \States} \kappa_\A\{\B\} = 1$
where $\kappa_\A\{\A\} = 0$ by construction.

We also define $\lambda(\A \curvearrowright \B) = \Lambda(\A)\kappa_\A  \{\B\}$ the specific rate of jumps from $\A \in \States$ to $\B \in \States$. Both total rate $\Lambda(a)$ and the jump kernel $\kappa_a$, $a \in S$,  are determined by $\lambda$ through
$\Lambda(\A) = \sum_{\B \in \States} \lambda (\A \curvearrowright \B)$ 
and
$
\kappa_\A\{\B\} = \frac{\lambda(\A\curvearrowright\B)}{\Lambda(\A)},$, $b \in S.$
This has intuitive meaning. As the minimum of independent exponential random variables with rates $\lambda(a \curvearrowright b_1)$, \dots, $\lambda(a \curvearrowright b_k)$ is exponentially distributed with rate $\Lambda(a)$, one can either jump to a state drawn from $\kappa_a$ after $\on{Exp}(\Lambda(a))$ distributed time units, or chose the earliest jump to $b_1$, \dots, $b_k$ in the support of $\kappa_a$ with jump times drawn each from  (independent) distributions $\on{Exp}(\lambda(a\curvearrowright b_1))$, \dots, $\on{Exp}(\lambda(a\curvearrowright b_1))$.

A process has $\pi$ as equilibrium distribution if
$
\sum_{\A \in \States} \P(Z_t \in B \mid Z_s = \A) \pi\{ \A\} = \pi(B),
$
where $ t > s > 0, B \subset \States.$
A stronger requirement relevant for sampling is ergodicity, which for finite state spaces takes the form
$
\lim_{t \to \infty} \P\left(Z_t = b \mid Z_s = a\right) = \pi\{b\} 
$
for all $b, a \in S$ so that in the long run, states from $Z$ can be used to approximate samples from $\pi$.

\paragraph{Operators for Markov equivalence classes. }

\citet{chickering2002learning} defines two sets of operators on $\cM_n$.
The operator $\Insert(\gamma, x, y, T)$
inserts the edge $x \to y$ to the CPDAG $\gamma$ and directs previously undirected edges $t-y$ to $t\to y$ for $t \in T$, such that vertices $t \in T$ become ``tails'' of a v-structure $t \rightarrow y\leftarrow x$. Here $x$ and $y$ are not adjacent and $T$ are (undirected) neighbours of $y$ that are not adjacent to $x$. The resulting PDAG is then completed\footnote{The \emph{completion} of a PDAG refers to the CPDAG representation of the MEC with the same skeleton and v-structures as the PDAG. There are cases, when this CPDAG does not exist, namely when there are no DAGs with this skeleton and v-structures. A simple example is PDAG $C_4$, the cycle on four vertices.} to a CPDAG $\gamma'$ if possible, otherwise the insertion is not defined (invalid).

The operator $\Delete(\gamma', x, y, H)$
deletes an edge  $x-y$ or $x\to y$ of the CPDAG $\gamma'$ and directs previously undirected edges $x-h$ as $x\to h$ and $y - h$ as $y\to h$ for $h$ in $H$ such that vertices $h \in H$ become ``heads'' of new v-structures $x \rightarrow h \leftarrow y$. The resulting PDAG is then completed to a CPDAG $\gamma$ if possible, otherwise the deletion is not defined (invalid).

We call a move or jump from MEC $\gamma$ to MEC $\gamma'$ \emph{local} if there is a DAG $G \in \gamma$, which can be transformed to a DAG $G' \in \gamma'$ by a single edge insertion or deletion. Local moves are preferable for two reasons: Firstly, if a weight function $w$, for example the exponentiated BIC score, factorises over the DAGs, 
\[
w(G, \operatorname{Data}) = \prod_{x \in V} w(\Pa_G(x), x, \operatorname{Data}),
\]
then changes in $w$ can be computed efficiently by comparing local scores or local weights, see \citet{chickering2002learning}, corollaries 7 and 9.

Secondly, Theorems~15 and~17 of \citep{chickering2002learning} give precise criteria for the validity of local moves.
Denote by $\NA_x(y)$ the (undirected) neighbours of $y$ that are adjacent to $x$.
 In short, $\Insert(\gamma, x, y, T)$ is a valid local move, if and only if
(i) $\NA_{x}(y)$ and the elements of $T$ form a clique and
(ii) any path from $y$ to $x$ without a directed edge pointing towards $y$ (such a path is called semi-directed) contains a 
vertex in $\NA_x(y) \cup T$.
$\Delete(\gamma, x, y, H)$ is a valid local move, if and only if $H\subset \NA_x(y)$ and $\NA_{x}(y)\setminus H$ form a clique.

\section{Random walks on Markov equivalence classes}

The key for the construction of a Markov process on Markov equivalence classes is that the valid local $\Insert$ and $\Delete$ operators are mutual inverses.

\begin{lemma}[\cite{chickering2002optimal,zhou2023complexity} ]
If $\gamma' = \on{Insert}(\gamma, x, y, T)$, $x, y \in V$, $T \subset V$, $\gamma \in \cM_n$ is a valid local move, then there is a unique set of undirected neighbours $H$ of $y$ that are adjacent to $x$ in  $\gamma'$ such that
$
\gamma = \on{Delete}(\gamma', x, y, H).
$

Conversely if $\gamma = \on{Delete}(\gamma', x, y, H)$ is a valid local move, then there is a unique set of undirected neighbours $T$ of $y$ that are not adjacent to $x$ in $\gamma$ such that
$
\gamma' = \on{Insert}(\gamma, x, y, T).
$
\end{lemma}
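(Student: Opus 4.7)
My plan is to reduce both directions to the DAG level, exploiting the fact that valid local Insert and Delete moves correspond to single-edge modifications of suitable DAG members of the MECs involved. Once this reduction is set up, the two claims are symmetric: in each case existence comes from exhibiting the right DAG representative and then reading off $H$ (resp.\ $T$) from its structure, while uniqueness comes from the fact that CPDAGs are determined by their skeleton and v-structures.

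For the first claim, starting from a valid local move $\gamma' = \on{Insert}(\gamma, x, y, T)$, I would first use the clique condition (i) together with the semi-directed path condition (ii) to exhibit a DAG $G \in \gamma$ in which $\on{NA}_x^{\gamma}(y) \cup T$ all occur as parents of $y$. Acyclicity of $G' := G \cup \{x \to y\}$ will then follow from (ii), so that $G' \in \gamma'$ differs from $G$ by exactly one directed edge. I would then define the candidate $H$ as the subset of $\on{NA}_x^{\gamma'}(y)$ consisting of those undirected neighbours of $y$ in $\gamma'$ whose orientation in $G'$ witnesses a would-be v-structure $x \to h \leftarrow y$ upon removing $x \to y$. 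Validity of $\on{Delete}(\gamma', x, y, H)$ then follows because the remaining set $\on{NA}_x^{\gamma'}(y) \setminus H$ inherits a clique structure from $\on{NA}_x^{\gamma}(y)$. Finally, I would verify that the CPDAG-level action of this delete is precisely the DAG-level deletion of $x \to y$ from $G'$, so its output equals $\gamma$.

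Uniqueness is the cleanest step: since a CPDAG is uniquely determined by its skeleton and v-structures, and since $\on{Delete}(\gamma', x, y, H)$ introduces new v-structures $x \to h \leftarrow y$ at exactly the vertices $h \in H$, distinct choices of $H$ produce distinct output CPDAGs. Hence at most one $H$ can realise $\gamma$. The converse direction proceeds symmetrically, swapping the roles of Insert and Delete and of $T$ and $H$, with the clique condition on $\on{NA}_x(y)\setminus H$ now supplying the analogous DAG representative.

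The main obstacle I foresee is the interaction between the CPDAG completion step (Meek's orientation rules) and the DAG-level picture: one must ensure that completing the PDAG produced by Insert really does make the $H$ identified above the correct ``inverse-target'' for Delete, without side-effects that would derail the round trip. Rather than re-deriving the behaviour of completion from scratch, I would invoke Chickering's Theorems~15 and~17, which already encode exactly the structural equivalence between these operators at both the DAG and CPDAG level, reducing the argument to bookkeeping about which neighbours of $y$ become parents and which remain undirected.
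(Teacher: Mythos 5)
The paper does not actually prove this lemma: it is imported by citation from Chickering (2002) and Zhou et al.\ (2023), and no argument for it appears in the main text or the appendix. So there is no in-paper proof to compare against; I can only assess your outline on its own terms.

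Your strategy is the standard one and is essentially sound. The reduction to the DAG level via Chickering's Theorems~15 and~17 is exactly where the content lives: a valid local $\on{Insert}(\gamma,x,y,T)$ is witnessed by a consistent extension $G\in\gamma$ with $\NA_x(y)\cup T\subseteq \Pa_{G}(y)$ such that $G'=G+(x\to y)$ lies in $\gamma'$, and the inverse $\on{Delete}$ is read off from $G'$. Your uniqueness argument is the cleanest and fully correct part: for $H_1\neq H_2$ and $h\in H_1\setminus H_2$, the PDAG produced with $H_1$ has the v-structure $x\to h\leftarrow y$ while the one produced with $H_2$ leaves $x-h$ and $y-h$ undirected, and since completion preserves skeleton and v-structures the two output MECs differ; hence at most one $H$ (resp.\ $T$) can yield $\gamma$ (resp.\ $\gamma'$). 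The weaker link is existence: identifying the correct $H$, verifying that $\NA_x^{\gamma'}(y)\setminus H$ is a clique, and confirming that the completion of the resulting PDAG is exactly $\gamma$ are all deferred in your sketch to ``invoke Theorems~15 and~17,'' and the relationship between $\NA_x^{\gamma'}(y)$ and $\NA_x^{\gamma}(y)\cup T$ is asserted rather than derived. That is acceptable at the level of rigour the paper itself adopts (pure citation), but it means your proposal is a proof outline whose load-bearing steps still live in the cited theorems rather than a self-contained argument.
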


There may be two operators going from $\gamma$ to $\gamma'$, which is precisely the case if the inserted or deleted edge is undirected and $\on{Insert}(\gamma, x, y, T)$ equals $\on{Insert}(\gamma, y, x, T)$ (same for $\on{Delete}$). Phrased differently, the number of operators turning $\gamma$ into $\gamma'$ is identical to the operators for the reverse direction from $\gamma'$ to $\gamma$~\citep{zhou2023complexity}. 

\begin{lemma}[\cite{chickering2002optimal,zhou2023complexity}]\label{lemma:undirected}
The edge inserted by a local $\on{Insert}(\gamma, x, y, T)$ is undirected exactly if 
$T$ is empty and $\Pa(x) = \Pa(y)$. 
\end{lemma}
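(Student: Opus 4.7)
The plan is to prove both directions by comparing the output $\gamma'$ to the PDAG obtained from $\gamma$ by inserting the edge with the opposite orientation, together with an analysis of v-structures and chain components.

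For the forward direction, suppose the inserted edge is undirected in $\gamma'$. To obtain $T = \emptyset$, note that any $t \in T$ gives $t \to y$ in $\gamma'$ with $t$ non-adjacent to $x$, so the two orientations of $x - y$ inside the MEC would produce different v-structures at $y$ (the configuration $t \to y \leftarrow x$ arises only for $x \to y$), contradicting equivalence. For $\Pa(x) = \Pa(y)$, I argue by contradiction and pick $u \in \Pa(y) \setminus \Pa(x)$, examining the three possible relations of $u$ to $x$ in $\gamma$: if $u$ is non-adjacent to $x$, then $u \to y \leftarrow x$ is a new v-structure in $\gamma'$, compelling the edge; if $x \to u$, then $x \to u \to y$ is a directed path and the reverse orientation $y \to x$ would create a cycle, again compelling it; and if $u - x$ is undirected, then (since no Meek rule triggers under the assumption that $x - y$ is undirected) the edge $u - x$ persists in $\gamma'$, placing $u$, $x$, $y$ into a common chain component of $\gamma'$ via the path $u - x - y$, in contradiction with the directed edge $u \to y$ that persists from $\gamma$ into $\gamma'$.

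For the backward direction, assume $T = \emptyset$ and $\Pa(x) = \Pa(y)$, and compare the two PDAGs $P = \gamma \cup \{x \to y\}$ and $P^{*} = \gamma \cup \{y \to x\}$. The parent-equality hypothesis rules out the only potentially new v-structures: any $u \in \Pa(y) = \Pa(x)$ is adjacent to $x$ via $u \to x$, blocking $u \to y \leftarrow x$ in $P$, and symmetrically for $P^{*}$. Hence $P$ and $P^{*}$ share skeleton and v-structures. A short cycle argument using parent-equality shows no directed path from $x$ to $y$ or from $y$ to $x$ exists in $\gamma$ (the last vertex on such a path would lie in $\Pa(y) = \Pa(x)$, yielding a directed cycle), so both $P$ and $P^{*}$ admit DAG extensions. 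By uniqueness of the CPDAG determined by skeleton and v-structures, the completions of $P$ and $P^{*}$ coincide with $\gamma'$, and their DAG extensions exhibit both orientations $x \to y$ and $y \to x$ inside the MEC of $\gamma'$; hence the edge is undirected.

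The main obstacle I expect is the forward direction's subcase $u - x$ undirected: it relies on the non-trivial chain-component fact that a directed edge cannot lie within a chain component of a CPDAG, together with arguments that $u \to y$ (by persistence of its compelling v-structure) and $u - x$ (by absence of Meek triggering) both carry over from $\gamma$ into $\gamma'$. The backward direction is then routine once the v-structure equivalence and cycle conditions have been verified.
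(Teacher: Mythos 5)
First, note that the paper does not prove this lemma at all---it is imported from \citet{chickering2002optimal} and \citet{zhou2023complexity}---so there is no in-paper proof to compare against; I am judging your argument on its own. Your overall architecture (forward: case analysis on a witness $u \in \Pa(y)\setminus\Pa(x)$; backward: exhibit two Markov-equivalent DAGs carrying the two orientations) is the right kind of argument, and subcase (i) of the forward direction (a genuinely new v-structure $u \to y \leftarrow x$ compels the edge) is correct. But there are genuine gaps. The central one is the persistence assumption underlying subcases (ii) and (iii): you take for granted that $x \to u$, $u \to y$ remain directed, and $u - x$ remains undirected, when passing from $\gamma$ to $\gamma'$. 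This is false for $\on{Insert}$ in general: take $\gamma$ to be the v-structure $a \to b \leftarrow c$; then $\on{Insert}(\gamma, a, c, \emptyset)$ is valid and yields the fully undirected triangle, so \emph{both} directed edges of $\gamma$ become undirected in $\gamma'$. The mechanism is exactly the one your argument ignores: inserting $x \to y$ makes $x$ and $y$ adjacent and can thereby destroy v-structures and Meek-rule configurations that relied on their non-adjacency. So subcase (ii) cannot conclude from ``$x \to u \to y$ is a directed path in $\gamma$'' that $y \to x$ would create a cycle in some member of $\gamma'$ (that member may orient the triangle differently), and subcase (iii)'s chain-component contradiction rests on two carry-over claims you acknowledge but do not prove. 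These cases need a different route, e.g.\ arguing inside a fixed consistent extension $G' = G + (x\to y)$ via Chickering's covered-edge-reversal characterisation, or re-deriving the orientations in $\gamma'$ from the v-structures of $G'$ by Meek's rules.

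Two further gaps. In the forward direction you only establish $\Pa(y) \subseteq \Pa(x)$; the inclusion $\Pa(x) \subseteq \Pa(y)$ is never addressed, and its case analysis is not symmetric because $x$ and $y$ play different roles in the operator (a witness $u \to x$ with $u \not\sim y$ produces a v-structure at $x$ only in the hypothetical DAG with $y \to x$, not in $G'$). In the backward direction, your v-structure comparison between $P$ and $P^{*}$ only treats directed parents $u \in \Pa(y) = \Pa(x)$, but the operator also orients the undirected neighbours in $\NA_x(y)$ into $y$ in the consistent extension; these could a priori create v-structures $v \to y \leftarrow x$, and one must invoke that every $v \in \NA_x(y)$ is by definition adjacent to $x$ (and, for $P^{*}$, that $\NA_y(x)$ behaves symmetrically). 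Finally, extendability of $P^{*}$ does not follow from the absence of directed cycles alone---a PDAG can be acyclic yet have no consistent extension---so that step also needs the validity conditions of the reverse operator, not just your cycle argument.
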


We write $\gamma' \in \Insert(\gamma)$ and $\gamma \in \Delete(\gamma')$ to indicate that $\gamma'$  can be obtained from $\gamma$ by a valid \emph{local} $\Insert$ operation and that $\gamma$ can be obtained from $\gamma'$ by a valid \emph{local} $\Delete$ operation. 
For example this lemma entails, when declaring $\gamma, \eta \in \cM_n$ (undirected) neighbours if $\eta \in \Insert(\gamma)\cup \Delete(\gamma)$,
general algorithms to sample from undirected graphs such as a simple continuous time random walk on $S = \cM_n$ with jump intensity
\[
\lambda(\gamma \curvearrowright \eta) = \begin{cases}
    1 &\text{if $\eta \in \Insert(\gamma)\sqcup \Delete(\gamma)$}\\
    0 &\text{otherwise.}
\end{cases}
\]
This process has $\uU(\cM_n)$ as stationary distribution.
While this jump intensity is remarkably simple, practical implementation requires the efficient enumeration of valid $\Insert$ and $\Delete$ operators
for example to determine the total rate $\Lambda(\gamma) = |\Insert(\gamma)\sqcup\Delete(\Gamma)|$, a topic we come back to in  section \ref{section:algorithms}.
Here using lemma \ref{lemma:undirected} allows to account for multiple moves yielding the same CPDAG $\eta$.

Alternatively, one can also move towards $\eta$ with twice the rate if there are two operators from $\gamma$ to $\eta$, as long as one then also moves back from $\eta$ to $\gamma$ with twice the rate. This leads to an easier implementation and thus we proceed this way in our code. 
Also the Zanella process \citep{https://doi.org/10.48550/arxiv.1912.04681}, a generalisation of the simple continuous time random walk that 
can be used to sample from the a distribution $\pi$ defined on $\cM_n$, is now available. 

Let $\pi$ be a probability distribution on $\cM_n$.
Let $g\colon [0,\infty)\to [0, \infty)$ be a balancing function such as $\sqrt{t}$, $\min(1, t)$ or $t/(1+ t)$ with the property $g(t) = tg(1/t)$. The Zanella process $(Z_t)_{t \ge 0}$ on $\cM_n$ is defined by the intensity
\[
\lambda(\gamma \curvearrowright \eta ) =\begin{cases} g\left(\dfrac{\pi\{\eta\}}{\pi\{\gamma\}}\right) & \text{ if $\eta \in \Insert(\gamma)\sqcup \Delete(\gamma)$ } 
\\ 0 &\text{ otherwise}
\end{cases}, 
\]
where $\gamma \in \cM_n$.

\begin{theorem}
\label{zanella}
Let the target probability $\pi$ be strictly positive for all $\gamma \in \cM_n$.  
Then $Z$ is irreducible, $\pi$ is the unique stationary distribution and
\[
\lim_{t \to \infty} \P\left(Z_t = \gamma \mid Z_s = a\right) = \pi\{\gamma\} \quad \text{ for all $\gamma, \eta \in \cM_n$}.
\]
\end{theorem}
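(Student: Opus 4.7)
The plan is to verify the three standard conditions for an ergodic continuous-time Markov chain on a finite state space, namely detailed balance with respect to $\pi$, irreducibility, and then to invoke standard convergence theory for Markov jump processes.

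First, I would establish that $\pi$ satisfies detailed balance with respect to the jump intensity $\lambda$. For $\gamma, \eta \in \cM_n$ with $\eta \in \Insert(\gamma) \sqcup \Delete(\gamma)$, the mutual inverse property in the lemma preceding Lemma~\ref{lemma:undirected} ensures that $\gamma \in \Insert(\eta) \sqcup \Delete(\eta)$, so the reverse rate is nonzero precisely when the forward rate is. Setting $t = \pi\{\eta\}/\pi\{\gamma\}$ and using the defining balancing property $g(t) = t\, g(1/t)$ gives
\[
\pi\{\gamma\}\,\lambda(\gamma \curvearrowright \eta) = \pi\{\gamma\}\, g\!\left(\tfrac{\pi\{\eta\}}{\pi\{\gamma\}}\right) = \pi\{\eta\}\, g\!\left(\tfrac{\pi\{\gamma\}}{\pi\{\eta\}}\right) = \pi\{\eta\}\,\lambda(\eta \curvearrowright \gamma),
\]
using strict positivity of $\pi$ to divide. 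Summing over $\gamma$ yields that $\pi$ is stationary for the generator $Q$ with off-diagonal entries $\lambda(\gamma \curvearrowright \eta)$.

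Second, I would establish irreducibility, i.e.\ that the directed graph with edges $(\gamma, \eta)$ whenever $\lambda(\gamma\curvearrowright\eta)>0$ is strongly connected. It suffices to show that every $\gamma \in \cM_n$ communicates with the empty CPDAG $\gamma_0$. From any DAG $G$ representing $\gamma$, a sequence of single edge deletions empties the graph; each such edge deletion corresponds to a valid local $\Delete$ move on the CPDAG level (Chickering's results, together with the validity criterion involving $\NA_x(y)$, guarantee that every edge of a DAG representative is removable in some order). Conversely, inserting edges one by one with respect to a topological order of any target DAG in $\eta$ produces a sequence of valid local $\Insert$ moves from $\gamma_0$ to $\eta$. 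Chaining these two sequences gives a path $\gamma \to \gamma_0 \to \eta$ in the jump graph, so $Z$ is irreducible. By detailed balance plus strict positivity of $\pi$, the reversed paths are also admissible with positive rate, confirming strong connectivity.

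Third, since $\cM_n$ is finite, irreducibility plus stationarity of $\pi$ immediately give uniqueness of the stationary distribution. Standard theory of continuous-time Markov chains on finite state spaces (e.g.\ \citet{kallenberg2002foundations}) then yields the convergence conclusion
\[
\lim_{t \to \infty} \P(Z_t = \gamma \mid Z_s = \eta) = \pi\{\gamma\}
\]
for every $\gamma, \eta$: the generator $Q$ is irreducible on a finite set, its spectrum on the orthogonal complement of the constant functions lies strictly in the open left half-plane, and the semigroup $\mathrm{e}^{tQ}$ therefore converges geometrically to the rank-one projection onto $\pi$. The only genuinely causal-discovery-specific step is the irreducibility argument, which is the place where the structure of the $\Insert/\Delete$ operators and Chickering's validity criteria enter; the rest is general Markov chain theory and algebraic manipulation of $g$.
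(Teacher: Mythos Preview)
Your proposal is correct and matches the route the paper signals: the paper omits the proof but states it ``goes along similar lines as the proof of Theorem~\ref{irreducible}'', which establishes stationarity via the skew-balance criterion of Proposition~\ref{prop:stationary} (here with $\invo$ the identity, reducing exactly to your detailed-balance computation using $g(t)=t\,g(1/t)$) and irreducibility by showing every state communicates with the empty graph $\mathbf{0}_n$ through a chain of $\Delete$ moves, with the reverse direction obtained by symmetry. Your argument is the same in substance; the only cosmetic difference is that you invoke detailed balance directly rather than as the $\invo=\mathrm{id}$ case of the paper's skew-balance proposition.
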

The proof of this theorem goes along similar lines as the proof of Theorem \ref{irreducible} below, so we omit it. 

\begin{figure}
    \centering
\begin{tikzpicture}[yscale=0.8]
	\begin{pgfonlayer}{nodelayer}
		\node [style=empty] (2) at (-0.5, 2) {$\eta_1$};
		\node [style=empty] (3) at (0.5, 2) {$\eta_2$};
		\node [style=empty] (8) at (0, 0) {$\gamma$};
		\node [style=empty] (11) at (-1.5, -2) {$\zeta_1$};
		\node [style=empty] (12) at (-0.5, -2) {$\zeta_2$};
		\node [style=empty] (13) at (0.5, -2) {$\zeta_3$};
		\node [style=empty] (14) at (1.5, -2) {$\zeta_4$};
	\end{pgfonlayer}
	\begin{pgfonlayer}{edgelayer}
  		\draw [style=none] (8.center) to (2.center);
		\draw [style=none] (8.center) to (3.center);
		\draw [style=none] (8.center) to (11.center);
		\draw [style=none] (8.center) to (12.center);
		\draw [style=none] (8.center) to (13.center);
		\draw [style=none] (8.center) to (14.center);
	\end{pgfonlayer}
\end{tikzpicture}
\hspace{2.0cm}
\begin{tikzpicture}
	\begin{pgfonlayer}{nodelayer}
		\node [style=empty] (0) at (0, 0) {};
		\node [style=empty] (1) at (3, 0) {$\gamma^\runter$};
		\node [style=none] (2) at (-0.5, 2) {};
		\node [style=none] (3) at (0.5, 2) {};
		\node [style=none] (4) at (2.25, -2) {};
		\node [style=none] (5) at (2.75, -2) {};
		\node [style=none] (6) at (3.25, -2) {};
		\node [style=none] (7) at (3.75, -2) {};
		\node [style=empty] (8) at (0, 0) {$\gamma^\rauf$};
		\node [style=none] (9) at (2.5, 2) {};
		\node [style=none] (10) at (3.5, 2) {};
		\node [style=none] (11) at (-0.75, -2) {};
		\node [style=none] (12) at (-0.25, -2) {};
		\node [style=none] (13) at (0.25, -2) {};
		\node [style=none] (14) at (0.75, -2) {};
	\end{pgfonlayer}
	\begin{pgfonlayer}{edgelayer}
		\draw [style=edge] (8.center) to (2.center);
		\draw [style=edge] (8.center) to (3.center);
		\draw [style=edge, in=165, out=15] (8.center) to (1);
		\draw [style=edge] (1) to (4.center);
		\draw [style=edge] (1) to (5.center);
		\draw [style=edge] (1) to (6.center);
		\draw [style=edge] (1) to (7.center);
		\draw [style=edge, in=-165, out=-15] (8.center) to (1);
		\draw [style=punkte] (1) to (9.center);
		\draw [style=punkte] (1) to (10.center);
		\draw [style=punkte] (8) to (11.center);
		\draw [style=punkte] (8) to (12.center);
		\draw [style=punkte] (8) to (13.center);
		\draw [style=punkte] (8) to (14.center);
	\end{pgfonlayer}
\end{tikzpicture}
        \caption{On the left, MEC $\gamma$ with two neighbours $\eta_1, \eta_2$ in $\Insert(\gamma)$ and four neighbours $\zeta_1, \dots, \zeta_4$ in $\Delete(\gamma)$. The Zanella sampler for the uniform distribution on the space of MECs $\cM_n$ will leave $\gamma$ after an exponentially distributed time with total rate $\Lambda(\gamma) = 6$ towards one of the six neighbours drawn from $\kappa_\gamma = \uU(\{\eta_1, \eta_2, \zeta_1, \zeta_2,\zeta_3, \zeta_4\})
    $. On the right, the situation is shown for the Zig-Zag sampler. To target a uniform distribution on $\cM_n$, if $\gamma \in \cM_n$ has $2$ direct neighbours in $\Insert(\gamma)$  and $4$ direct neighbours in $\Delete(\gamma)$, then move up from $\gamma^\rauf$ with total rate 2, move from $\gamma^\rauf$ to $\gamma^\runter$ with rate $2 = 4 - 2$ and down from $\gamma^\runter$ with total rate $4$.}
    \label{fig:reversible}
\end{figure}
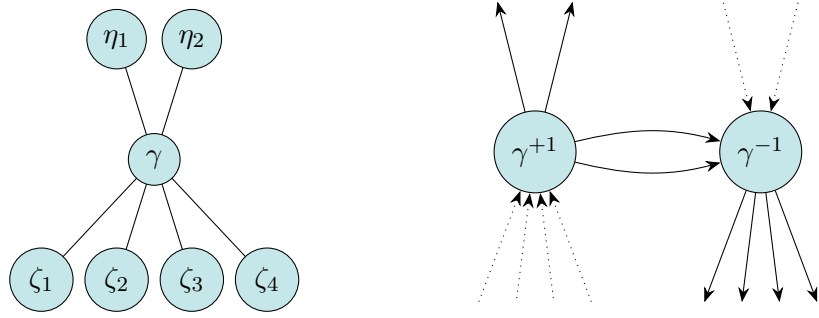

\section{The Causal Zig-Zag sampler}

We now define our sampler which can be thought of as Zanella process lifted by attaching a notion of direction. 
We baptise the non-reversible continuous-time sampler for Markov equivalence classes the ``Causal Zig-Zag'' motivated by the characteristic Zig-Zag pattern in the trace of the number of edges in the causal graph, see figure~\ref{fig:convergence100}.
Here, we exploit that $\Insert$ and $\Delete$ endow the space $\cM_n$ with an intuitive interpretation of direction.

Let $\States = \cM_n \times \{\runter, \rauf\}$. If $\gamma \in \cM_n$, we denote the element $(\gamma, \rauf)$ by $\gamma^\rauf$ and the element $(\gamma, \runter)$ by $\gamma^\runter$ and write $\gamma^d = (\gamma, d)$ for $d \in \{\runter, \rauf\}$.
Again, choose a balancing function $g$ and a target probability $\pi$ on $S$ and a Markov jump process $Z$ as follows:
For $\gamma \in \cM_n$,
\[
\lambda(\gamma^\rauf \curvearrowright \eta^\rauf) =\begin{cases} g\left(\dfrac{\pi\{\eta\}}{\pi\{\gamma\}}\right) & \text{ if $\eta \in \Insert(\gamma)$ }
\\ 0 &\text{ otherwise}.
\end{cases}
\]
\[
\lambda(\eta^\runter \curvearrowright \gamma^\runter) =\begin{cases}  g\left(\dfrac{\pi\{\gamma\}}{\pi\{\eta\}}\right) & \text{ if $\gamma \in \Delete(\eta)$ }
\\ 0 &\text{ otherwise}.
\end{cases}
\]
and for $\gamma \in \cM_n$ and $d \in \{\runter, \rauf\}$,\\
{\small
\hspace*{0.4cm}$
{\lambda( \gamma^ d \curvearrowright \gamma^{ -d}) =  \left(-\sum\limits_{\eta}  \lambda(\gamma^d \curvearrowright \eta^ d) \right.  + \left.\sum\limits_{\eta} \lambda( \gamma^{-d} \curvearrowright \eta^{-d})  \right)^+\!\!\!,}
$}\\
where $x^+ = \max(0,x)$ denotes the positive part. Note that $\lambda$ can be computed if $\pi$ is only known up to a multiplicative constant as typical for Bayesian applications. Figure~\ref{fig:reversible} illustrates the neighboring states for the Zanella and Zig-Zag sampler. 

\begin{theorem}
\label{irreducible}
Let the target probability $\pi\{\gamma\} > 0$ be strictly positive for all $\gamma \in \cM_n$.  
Then $Z$ is irreducible,
\[\P(Z_t = \B \mid Z_s = \A) > 0
\]
for all $\A, \B \in \States$.
The distribution $\tilde \pi$ on $S$ with $\tilde\pi(\gamma^d) = \pi\{\gamma\}/2$ is the unique stationary distribution and
\[
\lim_{t \to \infty} \P\left(Z_t = \gamma^d \mid Z_s = a\right) = \pi\{\gamma\}/2 \quad \text{ for all $ a \in S$}, 
\]
where $ \gamma \in \cM_n, d \in \{\rauf, \runter\}$.
\end{theorem}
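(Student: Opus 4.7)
The plan is to check stationarity of $\tilde\pi$ by verifying the global balance equation pointwise, then establish irreducibility by constructing explicit positive-rate trajectories that exploit extremal CPDAGs as flip points, and finally invoke the standard convergence theorem for irreducible Markov jump processes on a finite state space.

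Abbreviate $\lambda^\rauf(\gamma) = \sum_{\eta \in \Insert(\gamma)} g(\pi\{\eta\}/\pi\{\gamma\})$ and $\lambda^\runter(\gamma) = \sum_{\eta \in \Delete(\gamma)} g(\pi\{\eta\}/\pi\{\gamma\})$, so the flip rate from $\gamma^d$ to $\gamma^{-d}$ is $(\lambda^{-d}(\gamma) - \lambda^d(\gamma))^+$. For stationarity I would write out the incoming and outgoing flow at a state $\gamma^\rauf$ (the case $\gamma^\runter$ is symmetric). The incoming same-direction flow arises from $\eta^\rauf$ with $\gamma \in \Insert(\eta)$; the inverse-operator lemma identifies this condition with $\eta \in \Delete(\gamma)$, and the balancing identity $g(t) = tg(1/t)$ gives
\[
\pi\{\eta\}\, g(\pi\{\gamma\}/\pi\{\eta\}) = \pi\{\gamma\}\, g(\pi\{\eta\}/\pi\{\gamma\}),
\]
so the incoming up-mass equals $(\pi\{\gamma\}/2)\,\lambda^\runter(\gamma)$. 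Adding the incoming flip contribution $(\pi\{\gamma\}/2)(\lambda^\rauf(\gamma) - \lambda^\runter(\gamma))^+$ from $\gamma^\runter$ and comparing with the outgoing mass $(\pi\{\gamma\}/2)[\lambda^\rauf(\gamma) + (\lambda^\runter(\gamma) - \lambda^\rauf(\gamma))^+]$, the balance reduces to the elementary identity $x = x^+ - (-x)^+$ applied with $x = \lambda^\rauf(\gamma) - \lambda^\runter(\gamma)$.

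For irreducibility I would rely on the connectedness of $\cM_n$ under local Insert/Delete moves, which is implicit in Chickering's proof of completeness of the GES operators: any CPDAG can be reduced to the empty CPDAG $\emptyset$ by a sequence of valid local Delete moves, and grown to the complete CPDAG by valid local Insert moves. To travel from $\gamma^d$ to $\eta^{d'}$, I would chain these monotone paths through extremal flip points: at $\emptyset$ the rate $\lambda^\runter(\emptyset)$ vanishes, so $\lambda(\emptyset^\runter \curvearrowright \emptyset^\rauf) = \lambda^\rauf(\emptyset) > 0$; dually, at the complete CPDAG $\lambda^\rauf$ vanishes and the flip from $\rauf$ to $\runter$ has strictly positive rate. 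Composing up to four such monotone segments handles all four combinations of source and target directions and gives a finite positive-rate trajectory between any two states, which implies $\P(Z_t = b \mid Z_s = a) > 0$ for all $t > s > 0$.

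Uniqueness of $\tilde\pi$ and the limit $\P(Z_t = \gamma^d \mid Z_s = a) \to \pi\{\gamma\}/2$ then follow from the standard ergodic theorem for irreducible continuous-time Markov chains on a finite state space, which does not require an aperiodicity hypothesis. The step I expect to be the main obstacle is irreducibility: the stationarity calculation is essentially bookkeeping once the skew-balance identity for $g$ is in hand, but irreducibility must be argued around the momentum variable and depends on confirming that extremal MECs genuinely provide flip opportunities and that the Insert/Delete graph on $\cM_n$ is connected.
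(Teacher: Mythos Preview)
Your proposal is correct. For stationarity you verify the global balance equation directly, whereas the paper packages the same computation into an abstract criterion (its Proposition~\ref{prop:stationary}): one checks that the involution $\invo(\gamma^d)=\gamma^{-d}$ is $\tilde\pi$-isometric, that skew detailed balance $\tilde\pi\{a\}\lambda(a\curvearrowright b)=\tilde\pi\{\invo(b)\}\lambda(\invo(b)\curvearrowright\invo(a))$ holds, and that $\Lambda(a)=\Lambda(\invo(a))$. Your use of $g(t)=tg(1/t)$ is exactly the skew-balance check, and your identity $x=x^{+}-(-x)^{+}$ is precisely what yields $\Lambda(\gamma^\rauf)=\Lambda(\gamma^\runter)$; the two routes are the same computation organised differently.

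For irreducibility the arguments genuinely diverge. The paper works with a single hub, $\mathbf 0_n^\runter$: from any $\gamma^\rauf$ it inserts until a flip becomes available (bounded by $n(n-1)/2$ steps, implicitly using the complete CPDAG as an upper barrier) and then deletes down to $\mathbf 0_n^\runter$; for the return direction it invokes skew balance to reverse a path from $\gamma^{-d}$ to $\mathbf 0_n^\runter$ into a path from $\mathbf 0_n^\rauf$ to $\gamma^{d}$. Your construction instead uses both the empty and the complete CPDAG as explicit flip points and chains purely monotone Insert/Delete segments between them. Your version is more constructive and avoids the path-reversal trick; the paper's is a bit shorter once skew balance is in hand and only needs the Delete-to-empty direction explicitly. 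Both rest on the same connectivity fact from Chickering's work that any CPDAG is joined to the empty (respectively complete) CPDAG by a sequence of valid local Deletes (respectively Inserts).
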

\begin{proof}
One first shows that any state $\gamma^d$ communicates with ${\mathbf 0}_n^\runter$, where $\mathbf 0$ denotes the empty graph. From this, the chain $Z$ is irreducible (aperiodicity is not a concern for continuous time chains.) This part of the proof we give in the supplement (it bears some similarity to the consistency argument for the greedy equivalence search algorithm.)

It remains to show that $\tilde \pi$ is the stationary distribution of $Z$. 
This follows by applying proposition \ref{prop:stationary} in the supplement which gives general criteria for stationarity. We proceed by checking the three conditions of the proposition (equations \eqref{isometry}, \eqref{skewbalance} and \eqref{semilocal}).

Firstly, $\invo\colon \States \to \States$, $\invo(\gamma^d) =\gamma^{-d}$ is a bijection on $S$ that is easily seen to be $\tilde\pi$-isometric (equation \eqref{isometry}).\\ 
Also, skew balance (equation \eqref{skewbalance}) holds: if $\eta \in \Insert(\gamma)$
\[
\tilde \pi\{\gamma^{\rauf}\}\lambda(\gamma^{\rauf} \curvearrowright \eta^{\rauf}) =\frac{
\pi\{\gamma\}}2  g\left(\frac{\pi\{\eta\}}{\pi\{\gamma\}}\right)  
\]
\[
= \frac{\pi\{\eta\}}{2} g\left(\frac{\pi\{\gamma\}}{\pi\{\eta\}}\right) = 
 \tilde\pi\{\eta^\runter\}  \lambda(\eta^\runter \curvearrowright \gamma^\runter)
\]
using the balancing property of $g$. Else, if $\eta \not\in \Insert(\gamma)$, also $\gamma \notin \Delete(\eta)$ and $\tilde \pi\{\gamma^{\rauf}\}\lambda(\gamma^{\rauf} \curvearrowright \eta^{\rauf}) = \tilde\pi\{\eta^\runter\}  \lambda(\eta^\runter \curvearrowright \gamma^\runter)= 0.$\\
Finally, we obtain the semi-local condition (equation \eqref{semilocal}), 
$\Lambda( \gamma^\rauf) = \sum_{\B \in \States} \lambda ( \gamma^\rauf \curvearrowright \B)
= \sum_{\eta \in \Insert(\gamma)} \lambda(\gamma^\rauf \curvearrowright \eta^\rauf)
+ \lambda(\gamma^\rauf \curvearrowright \gamma^\runter)\\
=  \sum_{\eta \in \Delete(\gamma)}  \lambda(\gamma^\runter \curvearrowright \eta^\runter) =
\sum_{\A \in \States} \lambda (\gamma^\runter \curvearrowright \A) \\ = 
\Lambda(\invo(\gamma^\rauf)).$
Thus the theorem is proved.
\end{proof}

\section{GES as limit of our sampler}\label{sec:ges}

It is interesting to note that when starting in the empty graph with the balancing function $g(x) = \sqrt{x}$ and target 
$
\pi \{\gamma\} = \exp(\beta s(\gamma)),
$ where $\beta > 0$ is a coldness parameter and $s$ is a Markov equivalent score, we recover the greedy equivalence search algorithm (GES) in the limit $\beta \to \infty$. 
In this limit, the $\Insert$ operator that improves the score the most is selected immediately with probability approaching 1 as long as there is such  an edge addition which improves the score at all. 
This is because for $\eta \in \Insert(\gamma)$,
\[\kappa_{\gamma^\rauf} \{\eta^\rauf\} =  
\frac{\exp(\frac12\beta (s(\eta)-s(\gamma))}{\sum\limits_{\zeta \in \Insert(\gamma)} \exp(\frac12\beta(s(\zeta)-s(\gamma)))}
\]
is a soft-max over the score improvements and the intensity  $\Lambda(\gamma)$ approaches infinity. If no edge addition can improve the score anymore, the direction changes immediately if there is an edge removal that increases the score. In following second phase, again with probability approaching one, the $\Delete$ operator that improves the score the most is immediately selected with probability approaching 1 by same argument. This way the process reaches with probability approaching 1 in time approaching 0 the highest scoring model along the same trajectory as the GES with the same computational effort as a GES (when implemented with the same algorithmic improvements given below). This proves the following statement:

\begin{theorem}
If started in the empty graph, with balancing function $g(x) = \sqrt{x}$, for all $t > 0$,
\[
\lim_{\beta \to \infty} \P(Z_t \in \{\gamma_\star^\rauf, \gamma_\star^\runter\}) = 1, 
\]
where $\gamma_\star$ is the CPDAG found by a two-pass greedy equivalence search starting in the empty graph.  

Moreover, for large $\beta$, with high probability $Z$ visits the same models as the two-phase GES, with the same computational effort.
\end{theorem}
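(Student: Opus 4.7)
My plan is to formalize the heuristic argument immediately preceding the theorem by partitioning a trajectory of $Z$ into (i) a forward stage reproducing the GES insertion phase, (ii) a direction flip, (iii) a backward stage reproducing the GES deletion phase, and (iv) a stability step at $\gamma_\star$. Two quantitative observations drive everything: for $\eta \in \Insert(\gamma)$ the rate $\lambda(\gamma^\rauf \curvearrowright \eta^\rauf) = \exp(\tfrac{\beta}{2}(s(\eta)-s(\gamma)))$ diverges whenever $s(\eta) > s(\gamma)$ and vanishes whenever $s(\eta) < s(\gamma)$; and the embedded jump kernel $\kappa_{\gamma^\rauf}$ is a soft-max in $\beta$ that concentrates on the argmax over $\Insert(\gamma)$.

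\textbf{Forward phase.} I would let $\mathbf{0}_n = \gamma_0, \gamma_1, \dots, \gamma_F$ be the sequence of CPDAGs visited by the forward phase of GES and set $\Delta_k = \max_{\eta \in \Insert(\gamma_k)} s(\eta) - s(\gamma_k) > 0$ for $k < F$. Proceeding by induction on $k$ and conditioning on $Z$ having reached $\gamma_k^\rauf$: the total insertion rate satisfies $\Lambda(\gamma_k^\rauf) \ge \exp(\tfrac{\beta}{2}\Delta_k)$, so the holding time is $O_{\P}(\exp(-\tfrac{\beta}{2}\Delta_k))$, while $\kappa_{\gamma_k^\rauf}$ assigns mass tending to $1$ to the GES successor $\gamma_{k+1}^\rauf$. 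A union bound over the $F$ deterministic steps yields that $Z$ reaches $\gamma_F^\rauf$ with probability tending to $1$ in total elapsed time $o(1)$.

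\textbf{Flip and backward phase.} At $\gamma_F^\rauf$ no insert improves the score, so $\sum_\eta \lambda(\gamma_F^\rauf \curvearrowright \eta^\rauf) \le |\Insert(\gamma_F)|$ is bounded in $\beta$. If the backward phase is nontrivial, there is some $\eta^\star \in \Delete(\gamma_F)$ with $\Delta' := s(\eta^\star) - s(\gamma_F) > 0$, so $\sum_\eta \lambda(\gamma_F^\runter \curvearrowright \eta^\runter) \ge \exp(\tfrac{\beta}{2}\Delta')$. Hence the flip rate $\lambda(\gamma_F^\rauf \curvearrowright \gamma_F^\runter)$ grows like $\exp(\tfrac{\beta}{2}\Delta')$, the holding time at $\gamma_F^\rauf$ is $o(1)$, and the first jump is the direction flip with probability tending to $1$. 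An induction identical to that of the forward phase, carried out in the $\runter$ direction, then drives $Z$ through the GES backward trajectory to $\gamma_\star^\runter$.

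\textbf{Stability and main obstacle.} Since GES terminates at $\gamma_\star$, every $\eta \ne \gamma_\star$ in $\Insert(\gamma_\star) \cup \Delete(\gamma_\star)$ satisfies $s(\eta) < s(\gamma_\star)$, assuming $\gamma_\star$ is a strict local optimum (which holds generically for scores such as BIC on continuous data). Consequently every rate leaving $\gamma_\star^\rauf$ or $\gamma_\star^\runter$ toward a neighbour other than the flip, as well as the flip rate itself, is bounded above by $\exp(-\tfrac{\beta}{2}\delta)$ for some $\delta > 0$. The total exit rate from $\{\gamma_\star^\rauf, \gamma_\star^\runter\}$ therefore vanishes, which combined with the $o(1)$ arrival time gives $\P(Z_t \in \{\gamma_\star^\rauf, \gamma_\star^\runter\}) \to 1$ for every fixed $t > 0$. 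The second assertion—that $Z$ visits the same intermediate models at the same computational cost as GES—then follows by applying the argmax-concentration step at each of the finitely many visits, since each jump of $Z$ requires exactly the operator enumeration that one GES iteration performs. The hard part will be tie-breaking: if several operators realise the maximal score improvement at some step, the soft-max limit spreads its mass over the tied argmaxes, so one must either invoke a genericity assumption on $s$ or reinterpret $\gamma_\star$ as any CPDAG that GES could reach under some tie-breaking rule.
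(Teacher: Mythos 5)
Your proposal is correct and follows essentially the same route as the paper: the paper's proof is exactly the informal argument you formalize --- soft-max concentration of the embedded jump kernel on the best $\Insert$, diverging total rate so holding times vanish, a forced direction flip once no insertion improves the score, the symmetric argument for the $\Delete$ phase, and vanishing exit rates at $\gamma_\star$. Your closing observation about ties and the need for $\gamma_\star$ to be a strict local optimum is a genuine caveat the paper silently assumes: under an exact score tie the rate to the tied neighbour equals $g(1)=1$ rather than vanishing, so the stated limit needs either a genericity assumption on $s$ or the reinterpretation of $\gamma_\star$ you suggest.
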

We refer to the thorough discussion in section 4 of \citet{chickering2002optimal}. In particular, we conclude with the remark  in section 4.3 that starting in the empty graph is an efficient way to converge towards the concentration of posterior mass in the large sample limit. Behaviour of piecewise deterministic processes under similar annealing schemes has been previously studied in \citet{Monmarch2016}.

\section{Efficient algorithms for the underlying graph operations}
\label{section:algorithms}
Before stating our algorithmic results, it is necessary to revisit a basic problem in this area: computing a DAG in the MEC represented by a given CPDAG. It is well-known that this task can be solved in linear-time $O(n+m)$ for CPDAGs with $n$ vertices and $m$ edges relying on algorithms from the chordal graph literature~\citep{chickering2002learning}.
The key observation is that the directed edges of the CPDAG can be ignored and any acyclic and v-structure-free orientation of the undirected edges, will yield a DAG from the MEC. 
This task can be performed using, e.g., by the graph traversal algorithm \emph{Maximum Cardinality Search}, MCS for short~\citep{tarjan1984simple}, which, at each step, visits a vertex with the highest number of already visited neighbours.  Appendix~A.2 in \citep{chickering2002learning} gives a good overview over this approach. 
More generally, the term \emph{consistent extension} is used to describe a DAG with the same adjacencies and v-structures as a given (C)PDAG. 

The computational task of applying one of the GES operators is fundamental, not only in the context of this work, but naturally also for GES itself and other score-based algorithms. 
Classically, the following approach is used, as described by~\citet{chickering2002learning}: First, the operator is applied locally by inserting/deleting the edge and orienting edges incident to $T$, respectively $H$, yielding a PDAG. 
Second, for this PDAG, a consistent extension is computed. Third, the new CPDAG is directly computed from the consistent extension. 

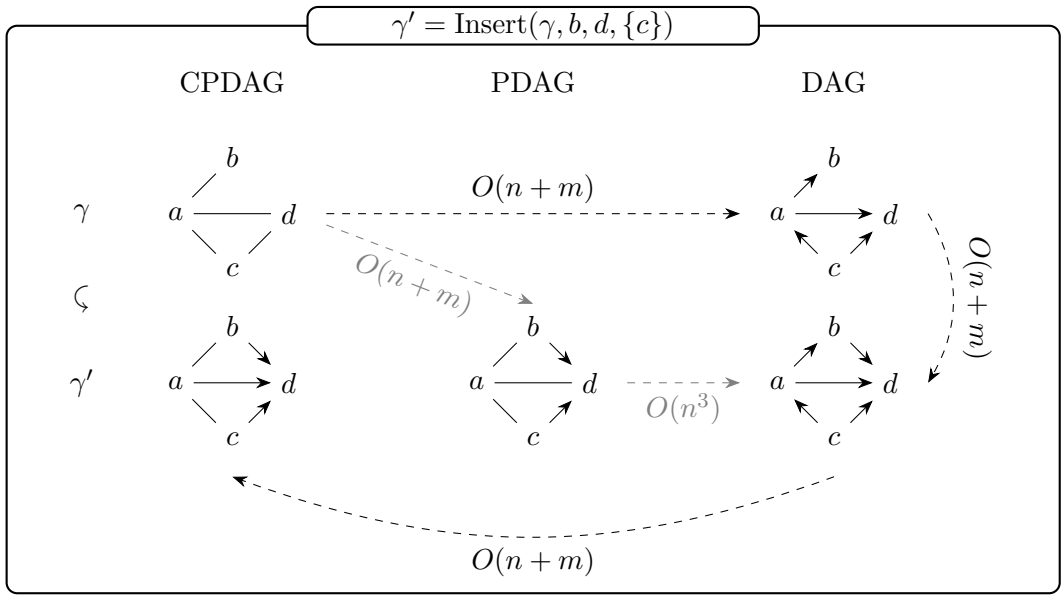
\begin{figure}[ht]
\centering
\begin{tikzpicture}[xscale=1.0]
    \draw[rounded corners, thick] (-3.0,-6.8) rectangle
      (11,0.75);
    \node[draw=black,rounded corners, thick, fill=white, inner
      sep=0pt, minimum height=0.5cm, baseline, minimum width=6cm] at
      (4,0.75) {$\gamma' = \Insert(\gamma, b, d, \{c\})$};
    \node (gam) at (-2, -1.75) {$\gamma$};
    \node[rotate=90] (g) at (-2,-2.875) {$\curvearrowleft$};
    \node (gamd) at (-2, -4) {$\gamma'$};

    \node (cpdag) at (0,0) {CPDAG};
    \node (pdag) at (4,0) {PDAG};
    \node (dag) at (8,0) {DAG};
    \node (a) at (-0.75,-1.75) {$a$};
    \node (b) at (0,-1) {$b$};
    \node (c) at (0,-2.5) {$c$};
    \node (d) at (0.75,-1.75) {$d$};
    \draw (b) -- (a) -- (d) -- (c) -- (a);
    \node (a) at (-0.75,-4) {$a$};
    \node (b) at (0,-3.25) {$b$};
    \node (c) at (0,-4.75) {$c$};
    \node (d) at (0.75,-4) {$d$};
    \draw (b) -- (a) -- (c);
    \draw[style=edge] (b) to (d);
    \draw[style=edge] (a) to (d);
    \draw[style=edge] (c) to (d);
    \node (a) at (3.25,-4) {$a$};
    \node (b) at (4,-3.25) {$b$};
    \node (c) at (4,-4.75) {$c$};
    \node (d) at (4.75,-4) {$d$};
    \draw (b) -- (a) -- (c);
    \draw (a) -- (d);
    \draw[style=edge] (b) to (d);
    \draw[style=edge] (c) to (d);
    \node (a) at (7.25,-1.75) {$a$};
    \node (b) at (8,-1) {$b$};
    \node (c) at (8,-2.5) {$c$};
    \node (d) at (8.75,-1.75) {$d$};
    \draw[style=edge] (c) to (d);
    \draw[style=edge] (c) to (a);
    \draw[style=edge] (a) to (d);
    \draw[style=edge] (a) to (b);
    \node (a) at (7.25,-4) {$a$};
    \node (b) at (8,-3.25) {$b$};
    \node (c) at (8,-4.75) {$c$};
    \node (d) at (8.75,-4) {$d$};
    \draw[style=edge] (c) to (d);
    \draw[style=edge] (c) to (a);
    \draw[style=edge] (a) to (d);
    \draw[style=edge] (a) to (b);
    \draw[style=edge] (b) to (d);
    
    \draw[dashed, style=edge] (1.25,-1.75) to node[midway,above] {$O(n+m)$} (6.75,-1.75);
    \draw[dashed, style=edge] (9.25,-1.75) to[bend left] node[midway,rotate=270,above]{$O(n+m)$} (9.25,-4);
    \draw[dashed, style=edge] (8,-5.25) to[bend left=20] node[midway,below]{$O(n+m)$} (0,-5.25);

    \draw[dashed, style=edge, color=gray] (1.25,-1.9) to node[midway,below,rotate=-21.25,xshift=-0.1cm] {$O(n+m)$} (4,-2.95);
    \draw[dashed, style=edge, color=gray] (5.25,-4) tonode[midway,below]{$O(n^3)$} (6.75,-4);
\end{tikzpicture}
\label{figure:applymove}
\caption{A schematic overview of the linear-time approach for applying a GES operator. Previous approaches add the inserted edge to the initial CPDAG, obtaining a PDAG associated with the new MEC $\gamma'$. However, going from this PDAG to the CPDAG, usually via a consistent DAG extension as intermediate step, necessitates time $O(n^3)$. In contrast, our approach finds a consistent DAG extension of the \emph{initial} CPDAG in time $O(n+m)$, which has the property that applying the operator \emph{directly} yields a DAG from $\gamma'$. Transforming this DAG into its CPDAG can be done in $O(n+m)$, as shown by~\citet{chickering1995transformational}.}
\end{figure}

The first and third step can be performed in linear-time, however, the second step, when performed naively, needs time $O(n^3)$~\citep{dor1992simple,wienobst2021extendability}. We provide a linear-time algorithm for this problem by modifying the first and second step, building on ideas from~\citet{chickering2002optimal} and~\citet{hauser2012characterization}:
\begin{theorem}\label{thm:extension}
    Let $\gamma$ be a CPDAG.
    Applying a GES operator $\on{Insert}(\gamma, x, y, T)$ or $\on{Delete}(\gamma, x, y, H)$ to $\gamma$ and obtaining $\gamma'$ is possible in time $O(n+m)$. 
 \end{theorem}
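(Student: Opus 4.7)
The plan is to replace the classical three-step procedure — apply the operator to the CPDAG to produce a PDAG, extend the PDAG to a DAG, then transform the DAG back into a CPDAG — whose middle step costs $\Theta(n^3)$ in general, by a reordered procedure that skips the slow extension entirely. Concretely, the strategy is to first compute a \emph{specific} consistent DAG extension $G$ of the \emph{initial} CPDAG $\gamma$ in time $O(n+m)$, then apply the GES operator \emph{directly} to $G$ — a purely local surgery around $x$ and $y$ — to obtain a DAG $G'$, and finally run the linear-time DAG-to-CPDAG routine of \citet{chickering1995transformational} on $G'$ to read off $\gamma'$. Two of the three steps are already known to run in $O(n+m)$, so the entire argument reduces to choosing $G$ so that (i) it exists, (ii) it is computable in time $O(n+m)$, and (iii) the direct DAG surgery yields a consistent extension of the \emph{target} CPDAG $\gamma'$.

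The workhorse is Maximum Cardinality Search on the chain components of $\gamma$, initialised in a way tailored to the operator. For $\on{Insert}(\gamma, x, y, T)$, the plan is to run MCS on the chain component of $y$ with tie-breaking that visits the vertices of $\NA_x(y) \cup T$ before $y$ and before the rest of the component; this is consistent with MCS because $\NA_x(y) \cup T$ is a clique by the validity conditions. In the resulting perfect elimination order every vertex in $\NA_x(y) \cup T$ precedes $y$ and so becomes a parent of $y$ in the induced acyclic, v-structure-free orientation, yielding a DAG $G \in \gamma$ in which adding $x \to y$ produces a candidate $G'$ whose only new v-structures are $\{t \to y \leftarrow x : t \in T\}$, since $t \in T$ is by definition non-adjacent to $x$ whereas each $z \in \NA_x(y)$ is. For $\on{Delete}(\gamma, x, y, H)$ the construction is symmetric: initialise MCS so that each $h \in H$ ends up as a common child of $x$ and $y$ in $G$, using that $\NA_x(y) \setminus H$ is a clique; after deleting $x - y$ or $x \to y$ and locally orienting the required edges at $H$, the only new v-structures are $\{x \to h \leftarrow y : h \in H\}$.

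The main obstacle is verifying correctness of the local surgery — that $G'$ is both acyclic and has exactly the v-structure set of $\gamma'$. Acyclicity in the Insert case is the delicate part and leans on the semi-directed-path validity condition of \citet{chickering2002optimal}: any directed $y$-to-$x$ path in $G$ would be a semi-directed path from $y$ to $x$ in $\gamma$, hence must pass through some vertex in $\NA_x(y) \cup T$, but those are parents of $y$ in $G$, contradicting acyclicity. Acyclicity in the Delete case is automatic because no new edges are added. Preservation of all pre-existing v-structures of $\gamma$ follows from the zero-fill-in property of perfect elimination orderings of chordal graphs, and the absence of unwanted new v-structures from the surgery is exactly what the tailored MCS order was designed to guarantee. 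Once $G' \in \gamma'$ is established, the \citet{chickering1995transformational} procedure returns $\gamma'$ in $O(n+m)$, and summing the three phases — tailored MCS in $O(n+m)$, local surgery in $O(\deg(y))$, and DAG-to-CPDAG conversion in $O(n+m)$ — gives the claimed linear bound.
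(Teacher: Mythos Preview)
Your proposal is correct and follows essentially the same route as the paper: compute a tailored consistent DAG extension of $\gamma$ via MCS initialised on the appropriate clique ($\NA_x(y)\cup T$ for $\Insert$, $\NA_x(y)\setminus H$ followed by $x,y$ for $\Delete$), perform the single-edge surgery to obtain $G'\in\gamma'$, then run the linear-time DAG-to-CPDAG conversion of \citet{chickering1995transformational}. The only difference is cosmetic: you spell out the acyclicity and v-structure verification yourself, whereas the paper delegates these to Theorems~15 and~17 of \citet{chickering2002optimal} and Proposition~43 of \citet{hauser2012characterization}; note in particular that once the MCS order already makes every $h\in H$ a child of both $x$ and $y$, the ``locally orienting the required edges at $H$'' step you mention is superfluous and the surgery is a single edge deletion.
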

 \begin{proof}
    By Theorem~15 and 17 in \citep{chickering2002optimal}, any GES operator corresponds to a single edge insertion/deletion in a certain DAG in the MEC of $\gamma$. Our approach is as follows. First, compute a consistent extension of $\gamma$, which has the property that a single insertion/deletion yields a DAG from the new MEC represented by $\gamma'$ in linear-time. 
    Exploiting that $\gamma$ is a CPDAG allows us to find this consistent extension $G$ in linear-time using a modified MCS (described below). Then, the insertion/deletion can be performed in constant time to yield DAG $G'$. Afterwards, the ``standard'' third step of finding CPDAG $\gamma'$ for DAG $G'$ is applied~\citep{chickering1995transformational}.

    To perform the first step, we distinguish between the $\on{Insert}$ and $\on{Delete}$ operator. In case of the $\on{Insert}(\gamma, x, y, T)$, we perform an MCS which starts with visiting the vertices in $T$ and $\NA_x(y)$. As they form a clique, it is easy to see that this does not violate the properties of an MCS (the visit order is one which could be produced by a "standard" MCS). As discussed in the proof of Theorem~15 in \citep{chickering2002optimal} and Proposition~43 in~\citep{hauser2012characterization} , this yields a DAG $G$ with the desired property that inserting $x \rightarrow y$ gives $G' \in \gamma'$.
    For the $\mathrm{Delete}(\gamma, x, y, H)$ operator, we proceed the same way only that vertices in $\NA_x(y) \setminus H$ are visited first (afterwards $x$ and $y$ in this order). By the proof of Theorem~17 in \citep{chickering2002optimal}, this gives a DAG $G' \in \gamma'$.
 \end{proof}

This time-complexity is asymptotically optimal, as there are graphs, for which $O(m)$ edges change after applying an operator.

In the framework described above, to obtain a \emph{uniform} MCMC sampler of CPDAGs, it suffices to count the number of operators and to sample an operator with uniform probability. We derive the first polynomial-time algorithm for this task.\footnote{The proof is provided in Appendix~B in the supplement.} 

\begin{theorem}\label{theorem:countops}
    Let $\gamma$ be a CPDAG.  
    The number of locally valid Insert and Delete operators can be computed in time $O(n^2\cdot m)$. Sampling an operator uniformly is possible in the same time complexity.
\end{theorem}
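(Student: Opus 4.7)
The plan is to count valid Insert and Delete configurations separately for each ordered pair $(x,y)$, sum, and then sample uniformly in two stages. The structural fact driving everything is that every chain component of a CPDAG is chordal, so the induced subgraphs on $\NA_x(y)$ and on the ``candidate set'' for $T$ are chordal and carry a perfect elimination ordering (PEO). A PEO of each chain component is precomputed once in $O(n+m)$ and, crucially, a PEO of any induced subgraph on a subset $S$ of a chain component is obtained by restriction in $O(|S|)$ time.

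For Delete, fix an edge $(x,y)$ of $\gamma$. A valid $\Delete(\gamma,x,y,H)$ corresponds to choosing $H$ so that $K := \NA_x(y) \setminus H$ is a clique in $\gamma[\NA_x(y)]$, which is chordal. Taking a PEO $v_1,\dots,v_k$ of $\gamma[\NA_x(y)]$, each non-empty clique is uniquely determined by its earliest-PEO vertex $v_i$ together with an arbitrary subset of the later neighbors of $v_i$ in this subgraph. This yields the closed-form count
\[
\#\{H\} \;=\; 1 + \sum_{i=1}^k 2^{d^+(v_i)},
\]
computable in $O(n)$ per edge and $O(nm)$ in total. Uniform sampling follows by choosing the earliest vertex with probability proportional to $2^{d^+(v_i)}$ and then flipping independent fair coins for each later neighbor.

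For Insert, fix non-adjacent $(x,y)$ and let $W = W(x,y) \subseteq \Ne(y)\setminus \mathrm{adj}(x)$ denote the common undirected neighbors of all vertices in $\NA_x(y)$ that lie in $\Ne(y) \setminus \mathrm{adj}(x)$. Condition (i) becomes ``$T$ is a clique in the chordal graph $\gamma[W]$''. Condition (ii) is monotone in $T$, so the valid $T$'s form an upper set in the clique complex of $\gamma[W]$. I would run a single semi-directed BFS from $y$ in $\gamma \setminus \NA_x(y)$ in $O(n+m)$ to identify, for each $v \in W$, whether $v$ lies on a bad $y$-to-$x$ semi-directed path, and then count valid $T$'s by a PEO-based DP on $\gamma[W]$ that additionally tracks whether all marked ``essential'' vertices are already blocked by the partial clique under construction. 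Each pair then costs $O(n+m)$, summing to $O(n^2 m)$ across the $O(n^2)$ pairs, and uniform sampling is the same DP run in sampling mode.

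The main obstacle will be showing that condition (ii) decomposes sufficiently over the PEO so that the DP only needs a small amount of state per elimination step: clique counting is naturally ``earliest-vertex local'' in chordal graphs, whereas path blocking is inherently global. A clean fallback, which I expect to work, is to prove that after the preprocessing BFS the valid $T$'s are exactly the cliques of $\gamma[W]$ containing a canonical minimal blocker $T_0 \subseteq W$, reducing the count to cliques of $\gamma[W]$ through $T_0$; this is again a straightforward PEO identity obtained by restricting the Delete formula to the chordal subgraph of vertices adjacent to every element of $T_0$.
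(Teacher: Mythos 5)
Your overall route coincides with the paper's: reduce both operators to counting cliques in chordal graphs via a perfect elimination ordering, and handle condition (ii) for Insert by isolating a forced subset of $T$. The Delete half is complete and correct --- the identity $1+\sum_i 2^{d^+(v_i)}$ over a PEO is exactly the count used in the paper (via a consistent extension, whose parent sets play the role of your later-neighbour sets), and the two-stage sampling is right.

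The gap is in the Insert half, and it is precisely the step you flag as ``the main obstacle.'' Your primary plan (a PEO-based DP carrying a blocking state) is not needed, but your fallback --- that the valid $T$'s are exactly the cliques of $\gamma[W]$ containing a canonical minimal blocker $T_0$ --- is the actual content of the theorem and you leave it unproven. Two things are missing. First, the correct definition of $T_0$: a vertex $w\in \Ne(y)\setminus\on{adj}(x)$ is forced into $T$ exactly when $x$ is reachable from $w$ by a semi-directed path whose interior avoids \emph{all} undirected neighbours of $y$ (such a path $y-w\cdots x$ can only ever be blocked at $w$ itself). Your proposed BFS ``from $y$ in $\gamma\setminus\NA_x(y)$'' deletes only $\NA_x(y)$, so it marks as essential any $v$ lying on a path that might in fact be blocked by another neighbour of $y$ that is itself forced or chosen into $T$; this over-marks and undercounts. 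Second, the sufficiency direction: one must show that once $T\supseteq T_0$ and $\NA_x(y)\cup T$ is a clique, \emph{every} semi-directed $y$-to-$x$ path is blocked. The paper's argument is short but essential: on an allegedly unblocked path take the undirected neighbour of $y$ closest to $x$; the suffix from it to $x$ avoids all undirected neighbours of $y$, so that vertex belongs to $T_0\subseteq T$, a contradiction. You also need to handle the degenerate case where some forced vertex fails to be a common neighbour of $\NA_x(y)\cup T_0$ (your $W$ silently discards such vertices, whereas the correct conclusion is that the count for that pair is zero). With $T_0$ defined and justified as above, the count reduces, as you anticipate, to cliques of the chordal graph induced on the remaining common neighbours, and the $O(n^2 m)$ bound and uniform sampling follow as in your Delete argument.
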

Sampling an operator in polynomial-time in this manner is only possible in the uniform case. When operators are weighted by their score, a different procedure is necessary.

There are multiple possible approaches to sample an operator proportional to an underlying local score, which may update after a move. In this work, we rely on the fact that, per move, usually only a few operator scores change. Hence, we use (i) caching of local scores to only recompute scores, which actually change. This is, as in the GES algorithm, crucial as the score computation can be the bottleneck of the algorithm (depending on sample size and the particular scoring procedure). Then, we (ii) efficiently list all operators one-by-one (without generating invalid operators), enabled by the insights from the previous section. 

\begin{corollary}
    Let $\gamma$ be a CPDAG with 
    maximum number of neighbors $d$. The operators can be listed in time $O(n^2\cdot m + |\mathrm{op}(\gamma)|\cdot d)$.  
\end{corollary}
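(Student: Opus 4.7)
The plan is to combine the preprocessing technique behind Theorem~\ref{thm:extension} with the enumeration structure underlying Theorem~\ref{theorem:countops}. The first step is to iterate over all $O(n^2)$ ordered pairs of vertices $(x,y)$ and, for each pair, spend $O(n+m)$ time setting up the local data needed to decide which candidate sets $T$ (for $\Insert$) or $H$ (for $\Delete$) are worth visiting: whether $x,y$ are adjacent, the set $\mathrm{NA}_x(y)$, the neighbours of $y$ not adjacent to $x$, and auxiliary structures describing the semi-directed paths from $y$ to $x$ that must be blocked (e.g.\ a precomputed list of ``critical'' neighbours that lie on some such path). This gives the $O(n^2\cdot m)$ term in the bound.

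For each pair, I would then enumerate the valid $T$ (resp.\ $H$) by a depth-first backtracking over the at most $d$ candidate vertices, fixing an arbitrary order and maintaining the clique constraint incrementally: extensions are only attempted into the common neighbourhood of the vertices already selected, so every partial set that is kept alive already satisfies condition~(i) of Chickering's criteria. Condition~(ii) (the blocking of semi-directed $y$--$x$ paths for $\Insert$) is monotone in $T$ and can be maintained from the preprocessed path information in $O(1)$ amortised per added vertex, so each emitted operator costs $O(d)$ work for the update, the output of $T$ (or $H$) of size at most $d$, and the constant-time success/failure decision. This is where the additive $O(|\on{op}(\gamma)|\cdot d)$ contribution comes from.

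The main obstacle is guaranteeing output sensitivity of the backtracking: naively, a pair $(x,y)$ with large $\mathrm{NA}_x(y)$ could produce many dead branches even though few operators are output. The crucial observation is that the clique pruning rule ensures that every visited partial set is itself a clique extending $\mathrm{NA}_x(y)$, hence is a prefix of some valid $\Insert$; in particular, the number of internal nodes of the backtracking tree is at most the number of leaves, i.e.\ $|\on{op}(\gamma)|$ restricted to this pair. Combined with the $O(d)$ per-node cost, this charges at most $O(d)$ work per emitted operator, and summing over pairs yields the claimed total.

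Combining the $O(n^2\cdot m)$ preprocessing with the $O(|\on{op}(\gamma)|\cdot d)$ output-sensitive enumeration closes the proof. The only subtlety I would have to handle carefully is the accounting for operators whose inserted/deleted edge is undirected, where the same CPDAG-pair is reachable via two operators (cf.\ Lemma~\ref{lemma:undirected}); here the fixed vertex order in the backtracking, together with a symmetry-breaking rule on $(x,y)$, ensures each operator is emitted exactly once without additional overhead.
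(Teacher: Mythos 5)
Your overall architecture---$O(m)$ preprocessing per ordered pair $(x,y)$ followed by an output-sensitive clique backtracking charged at $O(d)$ per emitted operator---is the intended one; the paper gives no explicit proof of this corollary, and what you describe is essentially the enumeration implicit in the proof of Theorem~\ref{theorem:countops}. The $\Delete$ side is sound as written: the valid $H$ are exactly the complements in $\NA_x(y)$ of cliques of the graph induced on $\NA_x(y)$, so every node of the backtracking tree is itself a valid operator, the number of visited nodes equals the number of operators for that pair, and the $O(d)$-per-node charge goes through.

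The gap is in the $\Insert$ case, in the step ``every visited partial set is itself a clique extending $\NA_x(y)$, hence is a prefix of some valid $\Insert$.'' That implication fails: a partial clique of candidates satisfying condition (i) need not be extendable to any $T$ satisfying condition (ii). Concretely, let the forced (``must-take'') neighbours --- those $w \in \Ne(y)$ nonadjacent to $x$ having a semi-directed path to $x$ that avoids $\Ne(y)$, which must lie in every valid $T$ --- consist of a single vertex $w$ adjacent to none of the other $k$ candidates, while those $k$ candidates are mutually adjacent. Then there is exactly one valid operator, $T=\{w\}$, yet a backtracking that prunes only on the clique condition and tests (ii) at each node visits all $2^{k}$ cliques among the other candidates as dead branches; the visited-node count is not bounded by the operator count. (Even where the premise held, ``prefix of a valid operator'' would not give ``internal nodes at most leaves''---a chain refutes that---and the operators are in any case \emph{all} nodes of the tree, not only its leaves.) The repair is the one the counting proof already uses: compute the forced set $M$ up front, reject the pair unless $M \cup \NA_x(y)$ is a clique, restrict the free candidates to the common neighbourhood of $M \cup \NA_x(y)$, and only then backtrack over cliques of this restricted set rooted at $M$. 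With that ordering every node of the tree \emph{is} a valid operator, which is the fact that actually delivers the $O(|\mathrm{op}(\gamma)|\cdot d)$ term.
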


Using this result and caching, the overall cost per move is in $O(n^2\cdot m + |\mathrm{op}(\gamma)| \cdot d + |\mathrm{changed}(\gamma)| \cdot \mathrm{scoreeval})$, where $\mathrm{scoreeval}$ describes the time of a score evaluation. 
In our empirical studies, we find that the number of operators per pair of vertices is often constant (when the undirected edge degree is constant) and that the number of changed operators is usually very small, making the algorithmic improvements impactful. 

\section{Conclusions} 
We provide a novel continuous-time momentum-based MCMC sampler over the space of MECs based on the GES operators~\citep{chickering2002optimal} and extended by a notion of direction. We show empirically that it can achieve favourable mixing time compared to earlier MCMC approaches and apply an efficient implementation of this sampler to the problem of observational causal discovery. In particular, our algorithmic improvements regarding the application of the GES operators, yielding linear-time for applying an operator and polynomial-time for counting the number of operators, go beyond this specific use case.

\newpage

\bibliography{ref}

\newpage

\section*{Acknowledgements}

Data used in preparation of this article were obtained from the Alzheimer’s Disease Neuroimaging Initiative (ADNI) database (\url{adni.loni.usc.edu}).

As such, the investigators within the ADNI contributed to the design and implementation of ADNI and/or provided data but did not participate in the writing of this article. A complete listing of ADNI investigators can be found at: \url{http://adni.loni.usc.edu/ wp-content/uploads/how_to_apply/ADNI_Acknowledgement_List.pdf}.

ADNI data is de-identified and publicly available for download.
All study participants provided written informed consent, and study protocols were approved by
each local site’s institutional review board.

{\small
Data collection and sharing for the ADNI project was funded by the
Alzheimer's Disease Neuroimaging Initiative (ADNI) (National
Institutes of Health Grant U01 AG024904) and DOD ADNI (Department of
Defense award number W81XWH-12-2-0012). ADNI is funded by the National
Institute on Aging, the National Institute of Biomedical Imaging and
Bioengineering, and through generous contributions from the following:
AbbVie, Alzheimer’s Association; Alzheimer’s Drug Discovery
Foundation; Araclon Biotech; BioClinica, Inc.; Biogen; Bristol-Myers
Squibb Company; CereSpir, Inc.; Cogstate; Eisai Inc.; Elan
Pharmaceuticals, Inc.; Eli Lilly and Company; EuroImmun; F.
Hoffmann-La Roche Ltd and its affiliated company Genentech, Inc.;
Fujirebio; GE Healthcare; IXICO Ltd.; Janssen Alzheimer Immunotherapy
Research \& Development, LLC.; Johnson \& Johnson Pharmaceutical
Research \& Development LLC.; Lumosity; Lundbeck; Merck \& Co., Inc.;
Meso Scale Diagnostics, LLC.; NeuroRx Research; Neurotrack
Technologies; Novartis Pharmaceuticals Corporation; Pfizer Inc.;
Piramal Imaging; Servier; Takeda Pharmaceutical Company; and
Transition Therapeutics. The Canadian Institutes of Health Research is
providing funds to support ADNI clinical sites in Canada. Private
sector contributions are facilitated by the Foundation for the
National Institutes of Health (\url{www.fnih.org}). The grantee
organisation is the Northern California Institute for Research and
Education, and the study is coordinated by the Alzheimer’s Therapeutic
Research Institute at the University of Southern California. ADNI data
are disseminated by the Laboratory for Neuro Imaging at the University
of Southern California.}

\newpage

\appendix

\section{Skew-balanced jump processes} 

\begin{prop}\label{prop:stationary}
If there is an bijection $\invo$ on $S$ that is $\pi$-isometric:
\begin{equation}\label{isometry}
\pi\{a\} = \pi\{\invo(a)\}, \quad a \in \States,
\end{equation}
such that \emph{skew detailed balance} 
\begin{equation}\label{skewbalance}
\pi\{\A\}\lambda(\A \curvearrowright \B) = \pi\{\invo(\B)\} \lambda(\invo(\B) \curvearrowright \invo(\A)) \quad \A, \B \in \States
\end{equation}
holds and 
such that the \emph{semi-local} condition 
\begin{equation}\label{semilocal}
\Lambda(\A) = \Lambda(\invo(\A)) 
\end{equation}
holds, then $Z$ is $\pi$-stationary.
\end{prop}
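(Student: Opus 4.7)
The plan is to verify the global balance equation
$\sum_{\A \in \States} \pi\{\A\}\lambda(\A \curvearrowright \B) = \pi\{\B\}\Lambda(\B)$
for every $\B \in \States$, which is the standard characterisation of $\pi$-stationarity for a Markov jump process on a countable state space. Because $\States$ is countable and all rates are finite, there are no analytic subtleties and the equation above can be taken as the defining property of the stationary distribution.

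From there the argument is a short chain of substitutions carried out in three steps. First, I would apply skew detailed balance \eqref{skewbalance} termwise, rewriting $\pi\{\A\}\lambda(\A \curvearrowright \B) = \pi\{\invo(\B)\}\lambda(\invo(\B) \curvearrowright \invo(\A))$; since the factor $\pi\{\invo(\B)\}$ does not depend on $\A$, it comes out of the sum. Second, because $\invo$ is a bijection on $\States$, the substitution $\A' = \invo(\A)$ reindexes the remaining sum as $\sum_{\A' \in \States} \lambda(\invo(\B) \curvearrowright \A') = \Lambda(\invo(\B))$. Third, I would invoke isometry \eqref{isometry} to replace $\pi\{\invo(\B)\}$ by $\pi\{\B\}$ and the semi-local condition \eqref{semilocal} to replace $\Lambda(\invo(\B))$ by $\Lambda(\B)$, giving precisely $\pi\{\B\}\Lambda(\B)$.

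The main conceptual point — rather than a genuine technical obstacle — is that each of the three hypotheses plays a distinct and non-redundant role: skew balance converts ``incoming flow into $\B$'' into ``outgoing flow from $\invo(\B)$''; bijectivity of $\invo$ is what keeps the reindexed sum ranging over all of $\States$; and without the semi-local condition the calculation would close at $\pi\{\B\}\Lambda(\invo(\B))$, which in general differs from $\pi\{\B\}\Lambda(\B)$ and would leave a genuine imbalance. Once these three ingredients are identified, the verification itself is a one-line calculation, so I do not anticipate a serious obstacle beyond making the reindexing step explicit.
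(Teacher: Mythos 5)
Your proof is correct and follows essentially the same route as the paper: skew detailed balance converts the incoming flow into $\B$ into outgoing flow from $\invo(\B)$, bijectivity of $\invo$ reindexes the sum to give $\Lambda(\invo(\B))$, and isometry plus the semi-local condition close the computation. The only cosmetic difference is that you verify the global balance equation pointwise at each $\B$, whereas the paper verifies the equivalent integrated criterion $\sum_{\A}\sum_{\B}\lambda(\A\curvearrowright\B)\bigl(f(\B)-f(\A)\bigr)\pi\{\A\}=0$ against arbitrary bounded test functions $f$; your version amounts to taking $f$ to be indicator functions, so the two are interchangeable on this finite state space.
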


\eqref{semilocal} typically requires that $\invo^n$ for some order $n = 1, 2, \dots$ is the identity map. If $\invo$ is the identity ($n=1$), then \eqref{semilocal} and \eqref{isometry} hold automatically and 
\eqref{skewbalance} reduces to a detailed balance condition.

Also the case $n = 2$ is important. A map $\invo\colon \States \to \States$ is an \emph{involution} if $\invo\circ \invo$ is the identity. 
For example, if $\States = \cX \times \{-1,1\}$, then $\invo$ with $\invo((x, d)) = \invo((x, -d))$ for $(x, d) \in \States$ is an involution. An involution is automatically an bijection.
Importantly, \eqref{skewbalance} is trivial for $\B = \invo(\A)$, but turns into a linear constraint if designing samplers using $\invo$ with higher orders $n$. 

\section{Remaining proofs}

A convenient criterium for stationary is as follows:
If $Z$ is stationary for $\pi$, then for bounded  $f\colon \States \to \RR$
\begin{equation}\label{timelocalstationary}
\sum_a \sum_b \lambda(a\curvearrowright \B) (f(\B) - f(a))  \pi\{a\} = 0.
\end{equation}
Conversely, if the preceding equation holds for all  $f\colon \States \to \RR$ bounded, 
then $Z$ is stationary.
]

\begin{proof}[Proof of proposition \ref{prop:stationary}]

The proposition follows from \eqref{skewbalance} by
$\sum_\A \sum_\B \lambda(\A\curvearrowright \B) f(\B)  \pi\{\A\} =$ $\sum_\A \sum_\B  \lambda(\invo(\B) \curvearrowright \invo(\A)) f(\B) \pi\{\invo(\B)\}$
and with $z = \invo(\A)$,
\[
= \sum_\B  \sum_z \lambda(\invo(\B) \curvearrowright z) f(\B) \pi\{\invo(\B)\} 
\]
and by the definition of the specific rate and its connection to the total
\[
=\sum_\B  \Lambda(\invo(\B)) f(\B)\pi\{\invo(\B)\}  = \sum_\A \sum_\B \lambda(\B\curvearrowright \A) f(\B)  \pi\{\B\} 
\]
\[
=  \sum_\A \sum_\B \lambda(\A\curvearrowright \B) f(\A)  \pi\{\A\} . 
\]
In the last step we use that $\Lambda(\invo(\B)) =\Lambda(\B) = \sum_{\A} \lambda(\B \curvearrowright \A)$ and $\pi\{\invo(\B)\} = \pi\{\B\}$.
So we have established \eqref{timelocalstationary} for any summable $f$.
\end{proof}

\begin{proof}[Supplement to the proof of Theorem \ref{irreducible}.]
  Let $\gamma^d \in \States$, where $\gamma$ is not the  graph with no edges $\mathbf 0_n$ (assume that $n > 1$ so there is something to show.) We now prove $\P(Z_t = \mathbf 0_n^\runter \mid Z_s = \gamma^d) > 0, \gamma^d \in \States$, $t > s$. 
We first find a state $\eta^\runter$ such that $\P(Z_{(t-s)/2} = \mathbf \eta^\runter \mid Z_s = \gamma^d) > 0, \gamma^d \in \States$. If $d = \runter$, one can take  $\eta = \gamma$. Otherwise, if $d = \rauf$,
though $\Delete(\gamma)$ is non-empty\footnote{$\gamma$ has edges, so there is a DAG $G \in \gamma$ from which an edge can be removed to obtain some $G' \in \Delete(\gamma)$}, it can still be that $\lambda(\gamma^\rauf \curvearrowright \gamma^\runter) =0$. But in that case, by construction, $\Insert(\gamma)$ is non-empty and $\lambda(\gamma^\rauf \curvearrowright \zeta^\rauf) > 0$ for some $\zeta \in \cM_n$. Repeating that argument, at most $n(n-1)/2$ many jumps lead to a state $\eta^\runter \in \States$ and together, these jumps have positive probability to occur in a time interval of length $(t-s)/2$, so $\eta^\runter$ is that state we are looking for.

Now from $\eta$ there is a sequence of at most $n(n-1)/2$  edge removal moves that reach $\mathbf 0_n$. Together these jumps have again positive probability to occur in a time interval of length $(t-s)/2$. Therefore $\P(Z_t = \mathbf 0_n^\runter \mid Z_{s} = \gamma^d) \ge  \P(Z_{t} = \mathbf {\mathbf 0}_n^\runter \mid Z_{(t-s)/2} = \eta^\runter) \P(Z_{(t-s)/2} = \mathbf \eta^\runter \mid Z_s = \gamma^d) > 0$. 

By repeating this argument,  
$\P(Z_t ={\mathbf 0}_n^\runter \mid Z_{(t-s)/2} = \gamma^{-d}) > 0$. Using skew balance to reverse the path from $\gamma^{-d}$ to ${\mathbf 0}_n^\runter$ into a path from ${\mathbf 0}_n^\rauf$ to $\gamma^{d}$, replacing edge inserts by edge deletions and vice versa,
$\P(Z_t =\gamma^d \mid Z_{(t-s)/2} = {\mathbf 0}_n^ \rauf)>0$.

Also $\P(Z_{(t-s)/2} ={\mathbf 0}_n^\rauf \mid Z_s = {\mathbf 0}_n^ \runter) > 0$ as $\lambda({\mathbf 0}_n^ \runter \curvearrowright {\mathbf 0}_n^ \rauf) > 0$
because there is no delete operator available, but one can insert an undirected edge to $\mathbf 0_n$. 
We therefore have $\P(Z_t =\gamma^d \mid Z_s = {\mathbf 0}_n^ \runter) \ge 
\P(Z_t =\gamma^d \mid Z_{(t-s)/2} = {\mathbf 0}_n^ \rauf) \P(Z_{(t-s)/2} ={\mathbf 0}_n^\rauf \mid Z_s = {\mathbf 0}_n^ \runter) > 0$.

This is sufficient because $S$ is finite.
\end{proof}

\begin{proof}[Proof of Theorem~\ref{theorem:countops}]
    The task immediately reduces to counting the number of operators for each pair of vertices. We consider the $\mathrm{Delete}(\gamma, x, y, H)$ operator first. Here, the set of operators correspond to the subsets of $\NA_x(y)$, which form a clique. This further reduces to the problem of counting (and sampling) the number of cliques of a chordal graph, that is a graph without induced cycles of length $\geq 4$,~\citep{dirac1961rigid} due to the fact that there can only be undirected edges between vertices in $\NA_x(y)$ (Lemma~3 \citep{chickering1995transformational}) and that these undirected edges form a chordal graph in a CPDAG \citep{andersson1997characterization}. It is a basic fact that the number of cliques of a chordal graph $\gamma$ is given by:
    \[
        \prod_{u \in V} 2^{\Pa_D(u)} + 1,
    \]
    where $G$ is any consistent extension of $\gamma$ due to the fact that all parents of $u$ form a clique (else $D$ would not be a consistent extension as it has additional v-structures). Each term in the product gives the number of cliques containing $u$ as highest ordered vertex w.r.t. some fixed topological ordering of $D$.  Evaluating this is clearly possible in $O(m)$ per pair $x, y$.

    For the $\mathrm{Insert}(\gamma, x, y, T)$ operator, the set of operators is formed by subsets $T$ of the undirected neighbours of $y$, which are nonadjacent with $x$, such that $\NA_x(y) \cup T$ is a clique and $\NA_x(y) \cup T$ blocks all paths from $y$ to $x$ without edge pointing towards $y$. 
    The latter condition complicates the matter. It can be resolved as follows: Consider, for each neighbor of $y$, the set of vertices reachable via a path without edges pointing towards $y$ (that is reachable via a semi-directed path $y - x \dots$)  not containing an undirected neighbour of $y$. This can be done independently of $x$ taking overall time (for all $y$) $O(n^2 m)$. If, under these constraints, $x$ is reachable from a neighbour $w$ of $y$, which is non-adjacent to $x$, then $w$ has to be in $T$ (else there is an open semi-directed path from $y$ to $x$). After taking all such vertices $w$, none of the remaining vertices has an open semi-directed path to $x$. We show this by contradiction. Assume there exists $z$ such that there is a semi-directed path from $z$ to $x$ not blocked by $\NA_x(y) \cup T$. There has to be a vertex $a$ on this path, which is a neighbour of $y$ else $z$ would be in $T$, consider the one closest to $x$. Then, this vertex has an unblocked semi-directed path to $x$ and hence is in $T$. This is a contradiction to the fact that the path is open given $\NA_x(y) \cup T$. 
    
    Hence, we can compute the set of vertices, which \emph{must} be in $T$ in overall time $O(n^2 m)$, respectively $O(m)$ per pair $x,y$. Consequently, they need to form a clique with $\NA_x(y)$ (this can be checked in $O(m)$ as well). The remaining neighbours of $y$ (non-adjacent with $x$), which are fully connected to $\NA_x(y)$ and the must-take vertices, may be part of $T$ as long as they themselves form a clique. Hence, we arrive at the problem of counting the number of cliques in a chordal graph studied above, which can be solved in time $O(m)$. 

    It is easy to see that sampling can be performed in time $O(n^2 m)$ (when performing counting as preprocessing) by first sampling a pair of vertices $x,y$ with probability proportional to the number of locally valid operators and second sampling an operator for this set with uniform probability (which amounts to sampling a clique in a chordal graph). 
\end{proof}

\section{Further models with high posterior probability for the ADNI database}
In the main document, we only showed the DAG with highest posterior probability of 0.701 for the data from the ADNI database.
In figure~\ref{fig:adniappendix}, we show the DAGs with the second- and third-highest posterior probabilities, which are 0.207 and 0.0049.  

\begin{figure*}[hb!]
\centering

\includegraphics[width=0.45\linewidth]{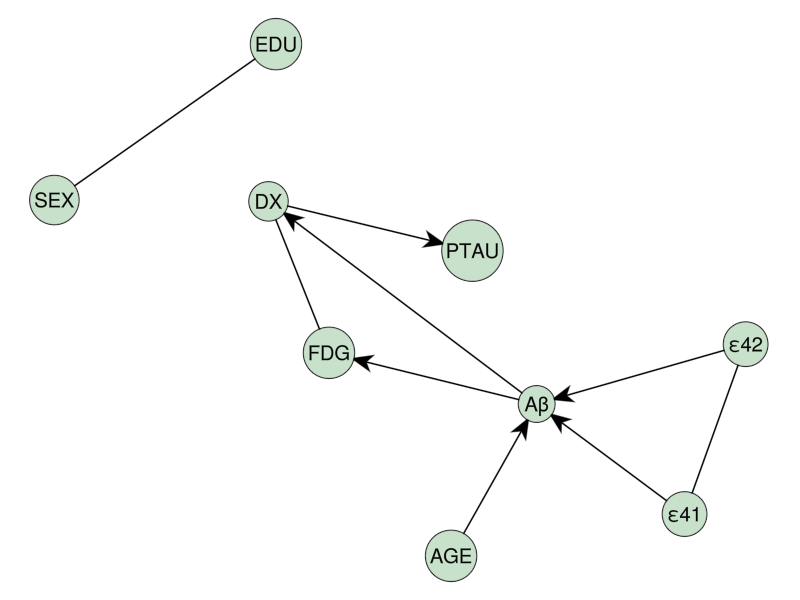}
\hspace{0.5cm}
\includegraphics[width=0.45\linewidth]{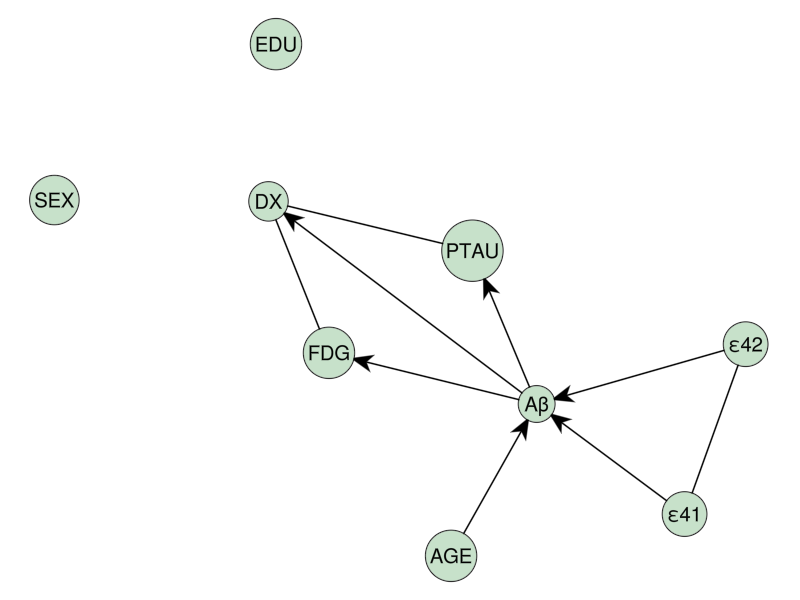}
\caption{The two models with second- and third-highest posterior probabilities, namely 0.207 and 0.0049. This illustrates one particular use case of our sampler, namely uncertainty quantification in the situation where GES applies, such as uncertainty about the edge SEX to EDU. }
\label{fig:adniappendix}
\end{figure*}

\end{document}